\title{First Hitting Diffusion Models for Generating Manifold, Graph and Categorical Data}
\author{%
  Mao Ye,\thanks{Corresponding author. Email: maoye21@utexas.edu}\ \ \ \ \ \ Lemeng Wu,\ \ \ \ \ \ Qiang Liu \\
  Department of Computer Science\\
  The University of Texas at Austin\\
}
\newcommand{\ours}{FHDM }
\begin{document}

\global\long\def\PP{\text{Pr}}%
\global\long\def\S{\mathcal{S}}%
\global\long\def\R{\mathbb{R}}%
\global\long\def\T{\mathcal{T}}%
\global\long\def\W{\mathcal{W}}%
\global\long\def\t{t}%

\global\long\def\bx{\bar{X}}%
\global\long\def\bt{\bar{\tau}}%
\global\long\def\by{\bar{Y}}%

\newtheorem{theorem}{Theorem}
\newtheorem{lemma}{Lemma}
\newtheorem{proposition}{Proposition}
\newtheorem{assumption}{Assumption}
\newtheorem{definition}{Definition}

\maketitle
\begin{abstract}
We propose a family of First Hitting Diffusion Models (FHDM), deep generative models that generate data with a diffusion process that terminates at a random first hitting time. This yields an extension of the standard fixed-time diffusion models that terminate at a pre-specified deterministic time. Although standard diffusion models are designed for continuous unconstrained data, FHDM is naturally designed to learn distributions on continuous as well as a range of discrete and structure domains. Moreover, FHDM  enables instance-dependent terminate time and accelerates the diffusion process to sample higher quality data with fewer diffusion steps. Technically, we train FHDM by maximum likelihood estimation on diffusion trajectories augmented from observed data with conditional first hitting processes (i.e., bridge) derived based on Doob's $h$-transform, deviating from the commonly used time-reversal mechanism. We apply FHDM to generate data in various domains such as point cloud (general continuous distribution),  climate and geographical events on earth (continuous distribution on the sphere),  unweighted graphs (distribution of binary matrices), and segmentation maps of 2D images (high-dimensional categorical distribution). We observe considerable improvement compared with the state-of-the-art approaches in both quality and speed.

\end{abstract}

\section{Introduction}
Diffusion processes have become a powerful tool in various areas of machine learning (ML) and statistics.
Traditionally, Langevin dynamics and Hamiltonian Monte Carlo have been  foundations for learning and sampling from graphical models and energy-based models.
Recently, denoising diffusion probabilistic models (DDPM) \citep{ho2020denoising} and 
score matching with Langevin dynamics (SMLD) with its variants 
\citep{song2019generative, song2020improved, song2020score} have achieved the state-of-the-art results on data generation \citep{dhariwal2021diffusion,chen2020wavegrad,niu2020permutation,hoogeboom2021argmax}.

Standard diffusion processes used in ML 
can be classified into two categories: %
1) \emph{infinite (or mixing) time} diffusion processes such as Langevin dynamics, 
which requires the process to run sufficiently long to converge to the \emph{invariant distribution}, 
whose property is leveraged for the purpose of learning and inference; 
and 2) \emph{fixed time diffusion} processes such as 
DDPM, SMLD, and Schrodinger bridges \citep{de2021diffusion}, 
which are designed to output the desirable results 
at a pre-fixed time. 
Although fixed-time diffusion has been show to surpass 
infinite time diffusion on both speed and quality, it still yield slow speed for modern applications due to the need of a pre-specified time and the incapability to adapt the time based on the difficulty of instances and problems. 
Moreover, standard diffusion models 
are naturally designed on $\RR^d$, and can not work for discrete and structured data without special cares. 

In this work, we study and explore a different \emph{first hitting time} diffusion model that terminates at the first time as it hits a given domain, and leverages the distribution of the exit location (known as exit distribution, or harmonic measure \citep{oksendal2013stochastic}) as a tool for learning and inference. We provide the basic framework and tools for first hitting diffusion models. We leverage our framework to develop a general approach for learning deep generative models based on first hitting diffusion. This approach generalizes SMLD and its SDE extensions but can be attractively applied to a range of discrete and structured domains. This contrasts with the standard diffusion models, which are restricted to continuous $\RR^d$  data. In particular, we instantiate our framework to three cases, yielding new diffusion models for learning 1) spherical, 2) binary and 3) categorical data. 
In addition, the proposed diffusion model gives different instances adaptive arrival times and can generate high-quality samples using fewer diffusion steps. We discuss theoretical properties and fast implementation of our methods and demonstrate their practical efficiency in a suite of practical learning problems.

\begin{figure}
\centering
\includegraphics[width=.9\textwidth]{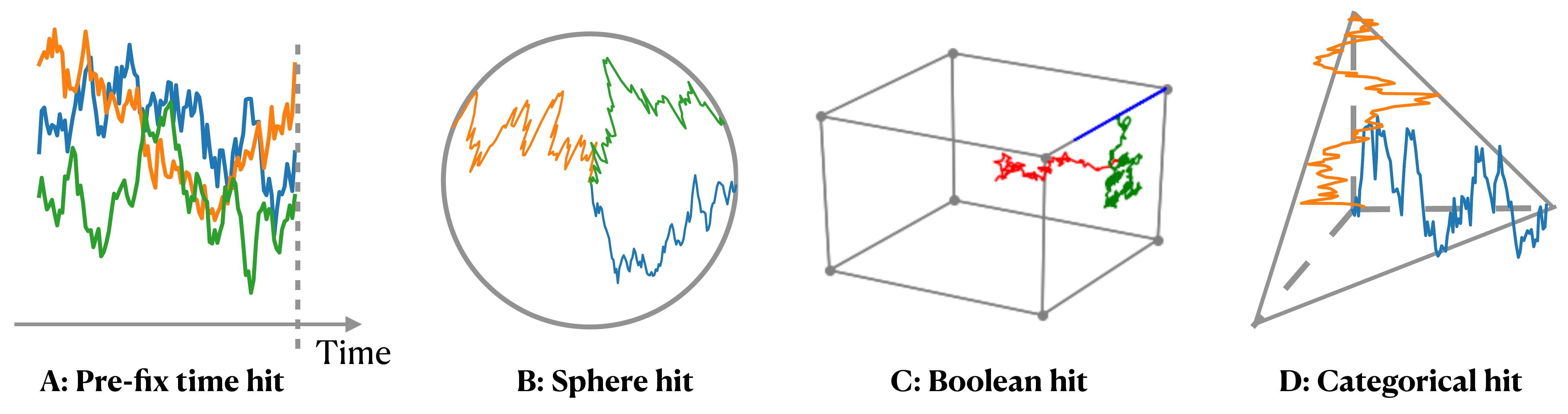}
\caption{{The four hitting schemes introduced in this paper. A: fixed-time hit, the process terminates at a fixed time; B: Sphere hit, hitting the boundary of a sphere from inside; C: Boolean hit, each coordinate terminates when it hits 0 or 1 and the whole process terminates when all of its coordinates terminate; D: Categorical hit, hitting the one-hot codes based on a conditioned process.}}
\label{fig:4hit}
\end{figure}

\section{Main Framework}
\subsection{First Hitting Diffusion Processes} 
Let $\tg$ be a distribution of interest on a domain $\Omega \subset \RR^d$.  
The goal is to construct a \emph{first hitting stochastic process}, which starts from a point outside of $\Omega$ and returns a sample drawn from $\tg$ when it first hits set $\Omega$. 
We start with introducing the new first hitting model.

Let $\traj \X \defeq \{\X_t \colon t \in[0,+\infty)\}$ be a continuous-time Markov process  with probability law $\Q$ taking value in a set $V$ that contains $\Omega$ as a subset.  
Here $\Q$ is a probability measure 
defined on the space of all continuous trajectories $C([0,+\infty), ~ \RR^d)$. We use $\Q_t$ to denote the marginal distribution of $\X_t$ at time $t$. 
We assume that the process is initialized from a point $\X_0$ outside of $\Omega$. 
Denote by $\tau$ the \emph{first hitting time}  of $\X_t$ on $\Omega$, that is, 
$\tau = \inf_{t} \{ t \geq 0 \colon \X_t \in \Omega  \}.$ 
We call that 
$\X_t$ is absorbing to set $\Omega$ if %

i) The process  enters $\Omega$ in finite time almost surely when initialized from anywhere in $V$, 
that is, $\Q(\tau < +\infty~|~ \X_0= z) =1$, $\forall z \in V$.

ii) The process stops to move once it arrives at $\Omega$, that is,   $\Q(\X_{t+s} = \X_t ~|~ \X_t \in \Omega) = 1$, $\forall s, t \geq 0$. 

We define the \emph{Poisson kernel} of $\Q$ as the conditional distribution of $\X_\tau$ given $\X_t = z$, denoted by $\Q_\Omega(\d\x  ~|~ \X_t=z) \defeq \Q(\X_\tau = \d\x ~|~ \X_t= z)$.  
The marginal distribution of $\X_\tau$, which we 
write as $\Q_\Omega(\df \x) = \Q(\X_\tau = \df \x)$, is called the 
\emph{exit distribution, or harmonic measure}.  Note that 
$\Q_\Omega(\df \x)= \int_V \Q_\Omega(\df x~|~\X_0 = z) \Q_0(\d z)$.
The crux of our framework is to leverage the exit distribution $\Q_\Omega$ as a tool for statistical learning and inference, 
which is different from traditional frameworks that exploit the properties of the distributions at a fixed time or at convergence.

\begin{exa}[Sphere Hitting]\label{exa:spherical}
As shown in Figure~\ref{fig:4hit}-B, 
let $V = \{x\in \RR^d \colon \norm{x}\leq 1\}$ be the unit ball and $\Omega  =  S_{d} \defeq \partial V$ the unit sphere. 
Let  $\X$ be a Brownian motion starting from $z \in V$ and stopped once it hits the boundary $\Omega$. It is written as 
\bbb \label{equ:sphere1}
{\dist Q^{\sph}}: &&
\df \X_t = \ind(\norm{\X_t}<1) \df W_t, 
~~~ ~~ \X_0  \in V,  
\eee 
where $W_t$ is a Wiener process; 
 the indicator function $\ind(\norm{\X_t}<1)$ 
sets the velocity to zero and hence stops the process once $\X_t$ hits $\Omega$. 
The Poisson kernel in this case is a textbook result: 
\bbb \label{equ:PoissonSd}
\Q^{\sph}_\Omega(\df x ~|~\X_t = z) \propto 
\frac{1-\norm{z}^2}{\norm{x-\z}^{{d}}} \times  \mu_{\Omega}(\df z),
&& \text{where $\mu_\Omega$ is the surface measure on $\Omega=S_{d}$. 
}
\eee  
\end{exa} 

\begin{exa}[Boolean Hitting] \label{exa:boolean}
As shown in Figure~\ref{fig:4hit}-C, let $V = [0,1]^d$ be the unit cube and
$\Omega = B_d \defeq \{0, 1\}^d$ the  Boolean cube. 
Let $\X$ be a Brownian motion starting from $Z_0\in V$ and confined inside the cube $V$ in the following way:   %
\bb 
\Q^{\binary}: &&
\df \X_{t,i} = \ind(\X_{t,i}\in(0,1))  \df W_{t,i}, ~~~ \forall i \in\{1,2,\cdots,d\}, %
\ee 
where $\X_{t,i}$ is the $i$-th element of $\X$. 
Here, each coordinate $\X_{t,i}$ stops to move once  it hits one of the end points ($0$ or $1$). 
It can be viewed as a particle flying in a room that sticks on a wall once it hits it. 
\end{exa} 
\begin{theoremEnd}[\isproofhere]{pro}
\label{thm:ber}
The Poisson kernel of $\Q^\binary$ is a simple product of Bernoulli distributions: 
$$
\dist Q^\binary_\Omega( x ~|~ Z_t = z) 
=\ber(x | z) \defeq \prod_{i=1}^d \ber(x_i | z_i),~~~\text{where}~~~
\ber(x_i|z_i) = x_i z_i + (1-x_i) (1-z_i); 
$$
$\ber(x_i|z_i)$ is the likelihood function of observing $x_i\in\{0,1\}$ under 
Bernoulli$(z_i)$ with %
$z_i\in[0,1]$. 
\end{theoremEnd}
\begin{proofEnd}%
As $\Q^{B_d}$ is a product of identical and independent one-dimensional processes, it is sufficient to consider the one dimension case $d=1$, in which case the process is a Brownian motion $ \d Z_t =\d W_t$ starting from interval $Z_0 \in [0,1]$ and stopped $\tau=\min_{t} \{t\colon ~Z_t \not\in (0,1)\}$ when it exits the interval. Hence, the Poisson kernel is 
$$
\dist Q^\binary_\Omega( x ~|~ z) 
= \prob(W_\tau =x ~|~ W_{t} = z) 
= \prob(W_\tau =x ~|~ W_{t} = z),~~~~\forall x\in \{0,1\}
$$
Then, 
it is a textbook result that $\prob(W_\tau =x ~|~ W_{t} = z) = x z + (1-x) (1-z) = \ber(x|z).$ See e.g., Eq.~3.0.4, Page 212 of \citet{borodin2015handbook}. 
\end{proofEnd}

\begin{exa}[Fixed Time Hitting]\label{exa:fixedtime}
Our first hitting framework includes the more standard models with fixed terminal time. To see this, let $ \bar \Z_t = (t, \Z_t)$ be a stochastic process $\X_t$  with law $\Q$ augmented with time $t$ as one of its coordinates. Let $V= [0,\t] \times \RR^d$ and $\Omega = \{\t\} \times \RR^d$, where $\Omega$ is a vertical plane on the augmented space. Then  the hitting time $\tau$ equals $\t$ deterministically, and the exit distribution equals the marginal distribution of $\Z_\t$ at time $\t$. See Figure~\ref{fig:4hit}-A, for illustration.
\end{exa}

 \subsection{Diffusion Process Tools: Conditioning and $h$-transform} 
 
 We introduce some basic tools for diffusion processes, including  how to conduct conditioning, and exponential tilting (via $h$-transform) on diffusion processes. 
We apply these tools to the first hitting models we have. 
The readers can find related background in \citet{oksendal2013stochastic, sarkka2019applied}. 

Assume $\X$ is  a general Ito diffusion process in $V$ that is absorbed to $\Omega$, denoted as $\itoo(b, \sigma)$, 
\bbb 
\label{equ:baseX}
\Q \sim \itoo(b, \sigma):  
&&
\df \X_t = b_t(\X_t) \dt + \sigma_t(\X_t)\d W_t,~~~\forall t\in[0,+\infty), && \X_0 \sim {\Q_0},
\eee 
where $b_t(x)\in \RR^d$ is the drift term and 
$\sigma_t(x)\in \RR^{d\times d}$ is a positive definite diffusion matrix.  
We always assume that  $b$ and $\sigma$ are sufficiently  regular to yield a unique weak solution of \eqref{equ:baseX}.%

\paragraph{Conditioning} 
 A step in our work is to find the distribution of the trajectories of a process $\Q$ conditioned on a future event, e.g., the event of hitting a particular value $x$ at exit, that is, $\{\X_\tau = x\}$.  
A notable result is that the conditioned diffusion processes are also diffusion processes. %
Given a point $\x \in \Omega$ on the exit surface, 
the process of $\Q(\cdot ~|~ \X_\tau = \x)$ can be shown to be the law of the following diffusion process \citep{doob1984classical, sarkka2019applied}:  
\bbb \label{equ:Xtr} 
\Q(\cdot|\X_\tau = x): &&
\df \X_t
= \left (b_t(\X_t) + \blue{\sigma_t^2(\X_t) \dd_{\X_t} \log q_{\Omega}(x ~|~\X_t) } \right ) \dt + \sigma_t(\X_t) \df W_t, 
~~
\X_0 \sim \mu_{0|x},
 \eee  
where $q_{\Omega}(x~|~ z)$ is the density function  of the Poisson kernel $\Q_{\Omega}(\d x ~|~ \X_t=z)$ w.r.t. a reference measure $\mu_\Omega$ on $\Omega$,
  and $\sigma^2$ is the matrix square of $\sigma$,
  and the conditional initial distribution $\mu_{0|x} = \Q_0(\cdot ~|~\X_\tau=x)$ is the posterior probability of $\X_0$ given $\X_\tau = x$. 
  
  Intuitively, 
the additional drift term $\dd_{\X_t} \log p_{\Omega}(x ~|~\X_t)$ plays the role of steering the process towards the target $x$, with an increasing magnitude as $\X_t$ approaches $\Omega$ (because $ P_{\Omega}(\cdot ~|~Z_t=z)$ converges to a delta measure centered at $x$ when $z$ approaches $\Omega$). 
 This process is known as a diffusion \emph{bridge}, 
 because it 
is guaranteed to achieve
$\X_\tau = x$  at the first hitting time 
with probability one. 

\begin{theoremEnd}[\isproofhere]{pro}\label{thm:Qsdconditioning} 
For $\Q^{\sph}$, the process conditioned on $\X_\tau = x \in S_{d}$ at exit is 
\bbb \label{equ:Xtrsphere} 
\dist Q^{\sph}(\cdot~|~ \X_\tau = x)
: &&&&&&&\df \X_t
= \ind(\norm{\X_t}<1)\left(\blue{ \dd_{\X_t}\log \frac{1-\norm{\X_t}^2}{\norm{x-\X_t}^{d}} } \dt + \df W_t\right). 
 \eee 
Here the additional drift term  (colored in blue)
 grows to infinity 
 if $\norm{Z_t} \to 1$ but $\norm{{Z_t}- x}$ is large, and hence enforces that $Z_\tau = x$ when we exit the unit ball. 
\end{theoremEnd}
\begin{proofEnd} 
It is a straightforward application of formula \eqref{equ:Xtr} in the case of $b_t = 0$, 
$\sigma_t(Z_t) = \ind(\norm{Z_t} <1)$ and $q_\Omega(x|z) = \frac{1-\norm{z}^2}{\norm{x-z}^d}$ as shown in \eqref{equ:PoissonSd}. 
\end{proofEnd}

\begin{theoremEnd}[\isproofhere]{pro}
\label{thm:binaryconditioning}
For $\Q^\binary$, the process conditioned on $\X_\tau = x \in \{0,1\}^d$ at exit is 
\bbb \label{equ:binaryBridge}
\Q^{\binary}(\cdot | Z_\tau = x): && 
\d Z_{t,i} = \ind(Z_{t,i}\in(0,1)) \left ( 
\blue{\frac{2x_i - 1}{x_i z_i +(1-x_i) (1-z_i)}}\dt +  \d W_{t,i} 
\right ),~~~\forall i.%
\eee  
The additional drift term (colored in blue) enforces that $\X_{\tau,i} = x_i$ at the exit time as the drift would be infinite if 
$z_i$ is still far from $x_i$ when $z_i$ is close to $\{0,1\}$. 
\end{theoremEnd}
\begin{proofEnd}%
It is a straightforward application of formula \eqref{equ:Xtr} in the case of $b_t = 0$, 
$\sigma_t(Z_t) = \mathrm{diag}(\ind(Z_t\in (0,1)))$ and $\Q_\Omega(x|z) =\ber(x|z)$ as shown in Proposition~\eqref{thm:ber}. 
\end{proofEnd} 
\begin{theoremEnd}[\isproofhere]{pro}
\label{thm:bb} 
For the fixed time diffusion 
in Example~\ref{exa:fixedtime}, let $\Q^{T}$ be the standard Brownian motion $\d Z_t = \d W_t$ stopped at a fixed time $t=T$, 
then $\Q$ conditioned on $\Q^T(Z | Z_T= x)$ is 
\bbb \label{equ:fixedTimeBridge}
\Q^T%
(\cdot | Z_\tau = x): && 
\d Z_{t} = \ind(t\le T) \left ( 
\blue{\frac{Z_{t}-x}{T-t}}\dt +  \d W_{t} 
\right ). 
\eee  
The additional drift (colored in blue) forces $Z_T=x$ as it grows to infinity if $Z_t \neq x$ while $t \to T$.
\end{theoremEnd}
\begin{proofEnd}%
This is the standard result on Brownian bridge. In particular, we just need to note that $(Z_{T} ~|~Z_t=z) \sim \normal(z, ~T-t)$ and apply formula \eqref{equ:Xtr}. 
\end{proofEnd}

\paragraph{$h$-Transform} Assume we want to modify the Markov process $\Z$ such that 
its exit distribution $\Q_\Omega$ matches the desirable target distribution $\tg$.  
Doob's $h$-transform \cite{doob1984classical} provides a simple general procedure to do so. 
Note that by disintegration theorem, we have $\Q(\d Z) = \int \Q_{\Omega}(\d x) \Q(\d \X ~|~ \X_\tau = x)$, 
which factorizes $\Q$ into the product of the exit distribution and the conditional process given a fixed exit location $Z_\tau =x$. 
To modify the exit distribution of $\Q$ to $\tg$,
we  can simply replace $\Q_\Omega$ with $\tg$ in the disintegration theorem, yielding %
\bbb \label{equ:ptg}
\Q^{\tg}(\d \X)
\defeq \int \tg(\d x) \dist Q(\d \X ~|~ \X_\tau= x)  
= \tgd(\X_\tau )\dist Q(\d \X),~~~\text{with}~~~ \tgd(\X_\tau )\defeq \frac{\d \tg }{\d \Q_\Omega}(\X_\tau), 
\eee 
where $\tgd=\frac{\d \tg }{\d \Q_\Omega}$ is the 
Radon–Nikodym derivative (or density ratio) between $\tg $ and $\Q_\Omega$, and 
$\Q^\tg$ is called an $h$-transform of $\Q$.      
Intuitively, 
$\Q^\tg$ is the distribution of trajectories 
$Z\sim \Q(\cdot|Z_\tau = x)$ when the exit location $x$ is randomly drawn from $x\sim \tg$. 
We can also view $\tgd(\X_\tau)$ as an importance score of each trajectory $\X$ based on its terminal state $\X_\tau$, and 
$\Q^{\tg}$ is obtained 
by reweighing (or tilting) the probability of each trajectory based on its  score. %

If $\Q$ is a diffusion process, then $\Q^\tg$ is also a diffusion process. In addition, $\Q^\tg$ is  the law of the following diffusion process: 
\bbb \label{equ:bridgeXh}
\Q^\tg: &&\df \X_t = \left (b_t(\X_t) + \blue{\sigma^2_t(\X_t) \dd_{z} \log h^\tg_t(\X_t) } \right ) \dt + \sigma_t(\X_t) \df W_t, ~~~ 
\X_0 \sim \Q^\tg_0
\eee
where the initial distribution $\Q_0^\tg$ and $ h^{\tg}$ in the drift term are defined as 
\bbb \label{equ:barh}
\Q_0^\tg(\d z) 
& = \int_{\Omega} \tgd(x) \Q(\X_\tau=\d x, \X_0=\d z) \\ 
 h^\tg_t(z) &  = \E_{\dist Q} [\tgd(\X_\tau)~|~ \X_t = z] = 
\int_{\Omega} \tgd(x)  \dist Q(\X_\tau =\d  x~|~ \X_t = z). 
\eee  
It is clear that $h$ coincides with $\pi^*$ on the boundary, that is, $ h_{\tgd}(x, t) = \tgd(x)$ for all $x\in \Omega, t\geq 0$. 
The name of $h$-transform comes from the fact that $h^\tg$ is a (space-time) harmonic function w.r.t. $\Q$ in the light of a mean value property: 
$h^{\tg}_t(z) = \E_{\Q} [ h^{\tg}_{t+s}(\X_{t+s})~|~ \X_t = z],$  $\forall s, t >0. $
   $\Q^\tg$ yields a simple variational representation in terms of Kullback–Leibler (KL) divergence.
  \begin{theoremEnd}[\isproofhere]{pro}[Variational Principle]\label{thm:var}
  The $\Q^\tg$ in \eqref{equ:ptg} yields 
 \bbb 
 \Q^\tg  & = \argmin_{\P \in \mathcal P(V, \Omega)} \left\{\KL(\P~||~\Q) \defeq \E_{\P}\left [ \log \frac{\df \P}{\df \Q}(Z)\right ], ~~~~s.t.~~~~ \P_\Omega = \tg \right\}\label{equ:sch}  \\    %
  & = \argmin_{\P \in \mathcal P(V, \Omega)}  \left\{\KL(\P~||~ \Q^\tg) \equiv \KL(\P~||~\Q) - \E_{\P}[\log  \tgd(\X_\tau)] \right\}, \label{equ:schr}
 \eee 
 where $\mathcal P(V, \Omega)$ denotes the set of path measures on $V$ that is absorbed to $\Omega$. 
 \end{theoremEnd}  
 Eq.~\eqref{equ:sch} shows that $\Q^\tg$ is the distribution with $\tg$ as the exit distribution that has the minimum KL divergence with $\Q$.
 It can be viewed as a Schrodinger half bridge problem \citep[e.g.,][]{pavon2021data}, which
 enforces the constraint of $\P_T = \tg$ at a fixed time $T$, rather than %
 the first hitting time $\tau$. 
   Eq.~\eqref{equ:schr} shows that the constraint can be turned into a penalty. %
 \begin{proofEnd} %
 Eq.~\eqref{equ:schr} 
 is the direct result of $\Q^\tg = \argmin_{\P } \KL(\P~||~ \Q^\tg)$, and that 
 $$\KL(\P~||~ \Q^\tg) \equiv \KL(\P~||~\Q) - \E_{\P}[\log  \tgd(\X_\tau)], 
 $$
 where we used $\Q^\tg(\d Z) = \Q(\d Z) \pi^*(Z_\tau)$. 

Eq~\eqref{equ:sch} is a simple consequence of the disintegration theorem. Note that any $\P$ that satisfies $\P_\Omega = \tg$ can be written into $\P (\d \X) = \tg(\d \X_\tau ) \P(\d \X ~|~ \X_\tau)$. By the chain rule of KL divergence, 
\bbb \label{equ:chainKL}
\KL(\P~||~\Q) = \KL(\tg~||~\Q_\Omega) 
+ \E_{\X_\tau\sim \tg }\left [ \KL(\P(
\cdot | \X_\tau) ~||~ \Q(\cdot|\X_\tau)) \right ]. 
\eee
Since it is constrained that $\P_\Omega = \tg$, the optimal $\P$ is determined by the choice of $\P(\cdot | Z_\tau)$ and it  should yield $\P(\d \X ~|~ \X_\tau) = \Q(\d \X ~|~ \X_\tau)$. Therefore, the optimal $\P$ is   $\tg(Z_\tau) \Q(\cdot|Z_{\tau}) = \Q^\tg$. 

In fact, by the same derivation, we can see that \eqref{equ:sch} remains correct if we replace $\KL(\P~||~\Q)$ with $\KL(\Q~||~\P)$.
 \end{proofEnd}  

\paragraph{First Hitting Diffusion for Sampling}
The $h$-transform above readily provides 
a first hitting diffusion approach to approximate sampling from $\tg$, assuming we can approximate the drift term $h^\tg$. The Schrodinger-Follmer sampler \citep{huang2021schrodinger} can be viewed as a special case of this approach with a fixed exit time.  
We leave further exploration to future works. See more discussion in Appendix \ref{apx:sample}.

\subsection{Learning First Hitting Diffusion Models}
\begin{figure*}[t]
    \centering
    \includegraphics[width=1.01\textwidth]{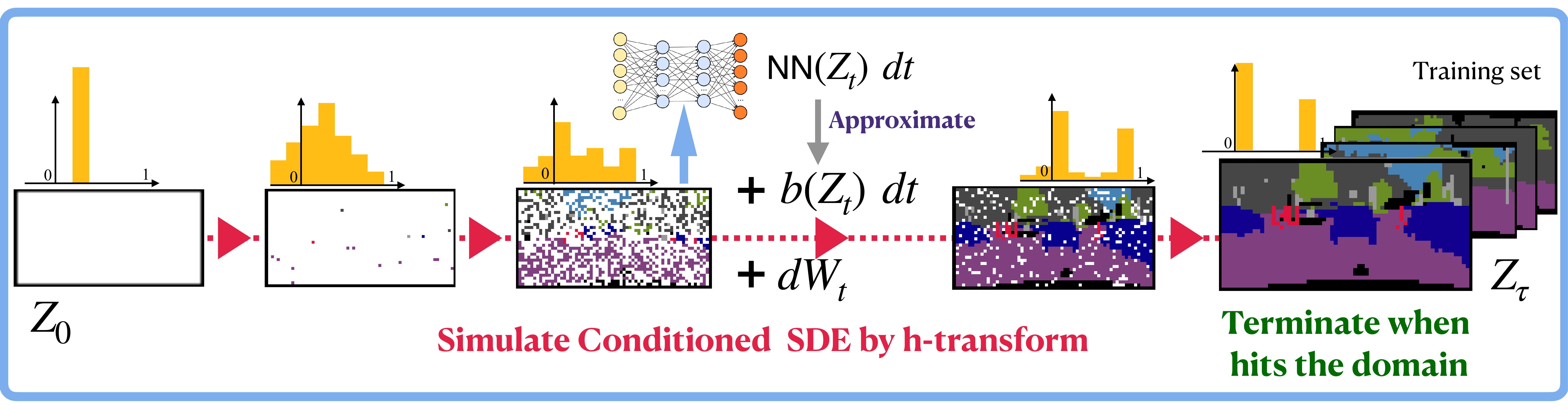}
    \vspace{-0.6cm}
    \caption{\small{The training pipeline of FHDM. Start from initial distribution, we use h-transform to simulate a conditioned SDE such that the process terminates at the desired destination data from training set at its hitting time. The network is trained to approximate the drift term ($b(Z_t)$), resulting a score-matching loss that is equivalent to the KL divergence.}} 
    \label{fig:pipeline}
\end{figure*}

\begin{algorithm}[t] 
\caption{Learning Generative Models by First Hitting Diffusion} \label{alg:learningmainmain}
\begin{algorithmic}
\STATE \textbf{Inputs \& Goal}: A data $\hat \Pi \defeq \{x\datai\}$ drawn from $\tg$ on $\Omega$. %
A baseline process $\Q$ and a  model $\P^\theta$ that are absorbing to $\Omega$. Want to find $\theta$ such that  $\P_\Omega^\theta \approx \tg$.  
\STATE \textbf{Training}: 
Approximate  $\hat \theta =\argmin_\theta \L(\theta)$ by stochastic gradient descent with batches of data (approximately) drawn $Z\sim \Q(\cdot |Z_\tau =x)$ and $x\sim \hat \Pi$.  
\STATE \textbf{Inference}: Simulate $\P^{\hat\theta}.$
\end{algorithmic}
\end{algorithm} 

We illustrate the learning pipeline of our First Hitting Diffusion Models (FHDM) in Figure~\ref{fig:pipeline}. Assume $\tg$ is unknown and we observe it through an i.i.d. sample $\{x\datai\}_{i=1}^n $ drawn from $\tg$. 
We want to fit the data with a parametric diffusion process $\itoo(s_\theta, \sigma)$ in $V$ that is absorbing to $\Omega$, 
\bbb \label{equ:ptheta} 
\P^\theta \colon &&&&&&\df 
\X_t =%
s_t^\theta(\X_t) \dt + 
\sigma_t(\Z_t)\d W_t, 
&& && ~~ \X_0 \sim \P^\theta_0, 
\eee  
such that the exit distribution $\P^\theta_\Omega$ matches the unknown $\tg$.  
Here $s^\theta_t(z)$ is a deep neural network with input $(z,t)$ and parameters $\theta$.
We should design $s^\theta$ and $\sigma$ properly to ensure the absorbing property. 
The standard approach to estimate $\tg$ is maximum likelihood estimation,
which can be viewed as approximately solving $\min_{\theta} \KL(\tg ~||~ \P^\theta_\Omega).$ However, calculating the likelihood of the exit distribution $\P^\theta_\Omega$ of a general diffusion process is computationally intractable. To address this problem, 
we fix $\Q$ as a ``prior'' process, 
and augment the data distribution $\tg$ to the $h$-transform $\Q^\tg$, whose exit distribution $\Q^\tg_\Omega$ matches $\tg$ by definition. 
Note that we can draw i.i.d. sample from $\Q^\tg$ in a ``backward'' way: first drawing an exit location $x\sim \tg$ from the data, 
and then draw the trajectory $Z$ from $\Q(\cdot | Z_\tau = x)$ with the fixed exit point.  To train a generative model, we train $\P^\theta$ to fit it with the data drawn from $\Q^\tg$ by maximum likelihood estimation:  
$$
\min_{\theta} 
\left \{ \L(\theta)\defeq  \KL(\Q^{\tg}~||~ \P^\theta)\equiv 
-\E_{Z\sim \Q^\tg}\left  [\log p^\theta(Z) \right] + \const,  
\right\},$$
where $p^\theta = \frac{\d\P^\theta}{\d \Q^\tg}$ is Radon–Nikodym density function of $\P^\theta$ relative to $\Q^\tg$. 
By the chain rule of KL divergence in 
\eqref{equ:chainKL} in Appendix \ref{apx:proof}, 
we have  $\KL(\tg ~||~ \P^\theta_\Omega) \leq \KL(\Q^{\tg}~||~ \P^\theta)$. 
Therefore, if %
minimizing the KL divergence allows us to achieve $\P^\theta \approx \Q^\tg$, 
we should also have $\P_{\Omega} ^\theta \approx \Q_\Omega^\tg = \tg$.

Using Girsanov theorem \citep{liptser1977statistics}, we can calculate the density function $p^\theta$ and hence the loss function. 
\begin{theoremEnd}[\isproofhere]{pro}
\label{thm:loss}
Assume $\Q$ in \eqref{equ:baseX},  
and $\P^\theta$ in \eqref{equ:ptheta} are absorbing to $\Omega$. We have
 \bbb \label{equ:mainloss} 
  \L(\theta) 
 & =  \frac{1}{2} \E_{\Q^\tg} 
 \!\!\!\left [ \int_0^\tau  \norm{
 \sigma_t(\X_t)^{-1}(s^\theta_t(\X_t) -
 b_t(\X_t~|~ \X_\tau)) %
 }^2 \df t - \log p_0^\theta(\X_0) \right ] + \const, 
\eee
where $b_t(z | x) \defeq b_t(z) + \sigma_t^2(z)\dd_z \log p_{\Omega}(x| z)$ is the drift of the conditioned process $\Q(\cdot |Z_\tau = x)$ in \eqref{equ:Xtr}, and 
$p_0^\theta$ is the probability density function of the initial distribution $\P^\theta_0$. 
In addition, $\theta^*$ achieves the global minimum of $\L(\theta)$ if 
\bb 
s_t^{\thetat}(z) =  \E_{Z\sim \Qt}[b_t(z | Z_\tau)~|~ Z_t = z], 
&& 
\P_0^{\thetat} = \Qt_0 = \E_{x\sim \tg}[\Q_0^x(\cdot)]. 
\ee 
\end{theoremEnd} 
\begin{proofEnd} 
Denote by $\Q^x = \Q(\cdot | Z_\tau = x)$. 
Let $p^\theta$ be the density function of $\P^\theta$ w.r.t. to some reference measure (e.g., $\Q^\tg$). We have 
\bb 
\KL(\Q^\tg ~||~ \P^\theta) 
& =  -\E_{Z \sim \Q^\tg} [\log p^{\theta}(Z)] + \const \\
& = - \E_{x\sim \tg} \E_{Z\sim \Q^x} [\log p^{\theta}(Z)]  + \const \\ 
& = \E_{x\sim \Pi^*}\left [ \KL(\Q^x ~||~ \P^\theta) \right ] +\const, 
\ee 
where $ \KL(\Q^x ~||~ \P^\theta)$ can be evaluated using Girsanov theorem, 
\bb 
 \KL(\Q^x ~||~ \P^\theta)  
 & = \KL(\Q^x_0 ~||~ \P^\theta_0) 
 + \frac{1}{2}  \E_{Z\sim \Q^x}
 \left [\int_0^\tau 
\norm{s^\theta_t(Z_t) - b_t(Z_t|x)}^2_2  \dt 
 \right ] \\ 
 & = 
   \E_{Z\sim \Q^x}\left [  
 -\log p_0^\theta(Z_0) + 
 \frac{1}{2} \int_0^\tau 
\norm{s^\theta_t(Z_t) - b_t(Z_t|x)}^2_2  \dt 
 \right ]  + \const.  
\ee 
Hence 
\bb 
\L(\theta)
& =  \E_{x\sim \tg, Z\sim \Q^x}\left [  
 -\log p_0^\theta(Z_0) + 
 \frac{1}{2} \int_0^\tau 
\norm{s^\theta_t(Z_t) - b_t(Z_t|x)}^2_2  \dt 
 \right ]  + \const  \\ 
 & =  \E_{Z\sim \Qt}\left [  
 -\log p_0^\theta(Z_0) + 
 \frac{1}{2} \int_0^\tau 
\norm{s^\theta_t(Z_t) - b_t(Z_t|Z_\tau)}^2_2  \dt 
 \right ]  + \const . 
\ee 
\end{proofEnd}
Therefore, the optimal drift term $s_t^{\thetat}$ 
should match the conditional expectation of $b_t(z|x)$ with $x\sim \Q_\Omega(\cdot| Z_t= z)$, which coincides with  the drift of $\Qt$ in \eqref{equ:bridgeXh}.  %
The initial distribution of $\P^\theta$ should obviously match the initial distribution of $\Q^\tg$. In practice, we recommend eliminating the need of estimating $\P^{\theta_0}$ by starting $\Q$ from a deterministic point $Z_0 = z_0$, 
in which case $\P^\theta$ should initialize from the same deterministic point. 
See Algorithm~\ref{alg:learningmainmain}.%

\paragraph{Learning Spherical Hitting Models}
Take $\Q = \Q^{\sph}$ %
in Example~\ref{exa:spherical}, 
we get a method for learning generative models for data on the unit sphere. 
We set the model to be $\d Z_t = \ind(\norm{Z_t}<1)(f^\theta_t(Z_t) \dt  +  \d W_t)$ to ensure that it is absorbing to $S_d$. 
The loss function is 
\bb 
  \L(\theta) 
 & =  \frac{1}{2} 
 \E_{\substack{x\sim \tg \\ Z\sim \Q^x} }
 \!\!\!\left [ \int_0^\tau  \norm{
 f^\theta_t(\X_t) -
 \dd_{Z_t} \log \frac{1-\norm{Z_t}^2}{\norm{x - Z_t}^d}
 }^2 \df t - \log p_0^\theta(\X_0) \right ]  + \const. 
\ee 

\paragraph{Learning Boolean Hitting Models}
Taking $\Q=\Q^{\binary}$ as in Example~\ref{exa:boolean} 
provides an approach to learning diffusion generative models for binary variables. 
We  set the model $\P^\theta$ to be $\d Z_t = \ind(Z_t\in(0,1)) \circ(f_t^\theta(\theta) \dt + \d W_t)$ to ensure that $\P^\theta$ is absorbing to $B_d$ like $\Q^\binary$, where $\circ$ denotes element-wise multiplication. 
The loss function is 
\bb 
  \L(\theta) 
 & =  \frac{1}{2} 
 \E_{\substack{x\sim \tg \\ Z\sim \Q^x} }
 \!\!\!\left [ \int_0^\tau  
 \norm{
\ind(Z_t\in(0,1))\circ  \left (f^\theta_t(\X_t) - \dd_{Z_t}\log {\mathrm{Ber}(Z_t|x)} \right )
 }^2 \df t - \log p_0^\theta(\X_0) \right ]  + \const. 
\ee

\paragraph{Learning Fixed Time Diffusion Models}
Following the fixed time setting in Example~\ref{exa:fixedtime}, 
we can recover the standard fixed time diffusion models for continuous data, such as SMLD and DDPM. 
In particular, a natural choice is to set $\Q$ to be an O-U process $\d Z_t = \alpha_t \Z_t \dt + \sigma_t \d W_t$ initialized from $Z_0 \sim \normal(0, v_0)$ where $\sigma_t\geq 0, v_0>0$. We show in Appendix \ref{apx:compare} that SMLD ($\alpha_t = 0$) and DDPM ($\alpha_t>0$) is recovered as the limit case when $v_0\to +\infty$. %

\subsubsection{Learning Categorical Generative Models}
In addition to the boolean hitting model, 
we provide here a first hitting framework for learning categorical data.  
In this case, the data domain  $\Omega$ is $C_{d,m}=\{e_1,\ldots, e_d\}^m$, where $e_i=[0,\ldots, 1,\ldots, 0]$ is the $i$-th one-hot (or basis) vector in $\RR^d$, so the data is a $m$-dimensional and $d$-categorical. 

It is less straightforward to construct a first hitting diffusion process that is absorbing to $C_{d,m}.$ 
We leverage the conditioning technique to achieve this.
We explain the idea with $m=1$, of which  the general case is a direct product. 
The key  observation is that the one-hot vectors $C_{d,1}$ is a subset of the boolean cube $B_d=\{0,1\}^d$.   
Hence, by definition, the conditioned process $\Q^{C_{d,1}}\defeq \Q^{B_d}(\cdot|Z_\tau \in C_{d,1})$ 
exits at $C_{d,1}$ from the inside of $B_d$. 
Using the method of $h$-transforms \cite{doob1984classical, sarkka2019applied},  
$\Q^{\Omega} \defeq \Q^{B_d}(\cdot|Z_\tau \in \Omega)$ for any $\Omega\subset B_d$ is the law of 
\bb 
\d Z_t = \ind(Z_t\in(0,1))  \circ 
\left (   \dd_z \log \ber(\Omega~|~Z_t) \dt + \d W_t
\right), %
&& ~~~~ 
\ber(\Omega~|~z )
\defeq \textstyle{\sum}_{e\in \Omega} \mathrm{Ber}(e~|~ z).
\ee 
Another challenge is to %
construct a parametric family of $\P^\theta$ that is absorbing to $C_{d,m}$, regardless of the value of $\theta$. 
The result below shows that this can be done by simply adding on top of $\Q^\Omega$ any bounded neural network drift term. 
\begin{theoremEnd}[\isproofhere]{pro} 
\label{thm:feq}
Let $V= [0,1]^d$ and $\Omega$ is any subset of $B_d = \{0,1\}^d$. 
Assume $f_t^\theta(z)$ is any bounded measurable function. 
Then the following process is guaranteed to hit $\Omega$ when it exits $V$: $$
\d Z_t = \ind(Z_t\in(0,1)) \circ 
\left (   
f_t^\theta(Z_t) ~+~  
\dd_{z}\log \mathrm{Ber}(\Omega~|~Z_t) \dt + \d W_t
\right), ~~~~ Z_0 \in (0,1)^d.
$$
\end{theoremEnd} 
\begin{proofEnd} 
Assume $\norm{f}_\infty \defeq \sup_{t\in[0+\infty), x\in [0,1]^d}\norm{f_t(x)}_2 < +\infty$. 
Consider the following two processes 
with the same initialization: 
\bbb\label{equ:q0} \begin{split} 
& \Q^0 \colon \d Z_t = \ind(Z_t\in(0,1)) \circ 
\left (   
\dd_{z}\log \mathrm{Ber}(\Omega~|~Z_t) \dt + \d W_t
\right) \\
& \Q^f \colon \d Z_t = \ind(Z_t\in(0,1)) \circ
\left (   
f_t^\theta(Z_t) ~+~  
\dd_{z}\log \mathrm{Ber}(\Omega~|~Z_t) \dt + \d W_t
\right). 
\end{split}
\eee 
Girsanov theorem shows that $\KL(\Q^0~||~ \Q^f) = \frac{1}{2}\E_{\Q^0}[\int_0^\tau \norm{f_t (Z_t)}^2] \leq \frac{1}{2}\norm{f}^2_\infty \E_{\Q^0}[\tau] <+\infty$, 
where we used the fact that the expected hitting time  $\E_{\Q^0}[\tau]$ of $\Q^0$ is finite (see Lemma~\ref{lem:hehq0} below). 

Now $\KL(\Q^0~||~ \Q^f)<+\infty$ implies that that $\Q^0$ and $\Q^f$ has the same support. Hence the fact that $\Q^0$ guarantees to hit $\Omega$ when exiting $V$, i.e., $\Q^0(Z_\tau \in \Omega) =1$, 
when exit implies that $\Q^f$ has the same property, i.e., $\Q^f(Z_\tau \in \Omega) =1$. 
\begin{lem}\label{lem:hehq0}
Let $\tau^0 = \inf\{ t\colon Z_t \in \Omega\}$ be the first hitting time to $\Omega \subseteq \{0,1\}^d$ of the 
 the process $\Q^0$ in Eq.~\eqref{equ:q0}. Then $\E[\tau^0]<+\infty$. 
\end{lem}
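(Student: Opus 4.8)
The plan is to sidestep the drift of $\Q^0$ entirely and use the fact, recorded in the $h$-transform discussion just above Proposition~\ref{thm:feq}, that $\Q^0$ is precisely the Boolean-hitting Brownian motion $\Q^{B_d}$ of Example~\ref{exa:boolean} conditioned on $\{Z_\tau\in\Omega\}$, i.e.\ $\Q^0=\Q^{B_d}(\,\cdot\mid Z_\tau\in\Omega)$. First I would observe that under $\Q^{B_d}$ started from a fixed $z_0\in(0,1)^d$ the $d$ coordinates are independent one-dimensional Brownian motions, each frozen once it leaves $(0,1)$; let $\bar\tau=\max_i\tau_i$ be the instant the last coordinate freezes. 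On the conditioning event the exit vertex lies in $\Omega$, while before $\bar\tau$ some coordinate is in $(0,1)$ and so $Z_t\notin\{0,1\}^d\supseteq\Omega$; hence $\tau^0=\bar\tau$ holds $\Q^0$-almost surely. Writing the conditioning as an explicit change of measure (whose normalizer is a genuine constant since $z_0$ is fixed) then gives
\[
\E_{\Q^0}[\tau^0]=\E_{\Q^{B_d}}[\bar\tau\mid Z_\tau\in\Omega]
=\frac{\E_{\Q^{B_d}}[\bar\tau\,\ind(Z_\tau\in\Omega)]}{\Q^{B_d}(Z_\tau\in\Omega)}
\le\frac{\E_{\Q^{B_d}}[\bar\tau]}{\Q^{B_d}(Z_\tau\in\Omega)},
\]
so the task reduces to upper bounding the numerator and lower bounding the denominator.

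For the numerator I would use $\bar\tau=\max_i\tau_i\le\sum_{i=1}^d\tau_i$, hence $\E_{\Q^{B_d}}[\bar\tau]\le\sum_i\E[\tau_i]$, and bound each one-dimensional exit time by the textbook estimate $\E[\tau_i]=z_{0,i}(1-z_{0,i})\le\frac{1}{4}$, obtained by applying optional stopping to the martingale $W_t^2-t$ for a one-dimensional Brownian motion started at $z_{0,i}\in(0,1)$ (finiteness of $\E[\tau_i]$ itself coming first from $\E[\tau_i\wedge n]=\E[W_{\tau_i\wedge n}^2]-z_{0,i}^2\le 1$ together with monotone convergence). This yields $\E_{\Q^{B_d}}[\bar\tau]\le d/4<\infty$. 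For the denominator I would invoke Proposition~\ref{thm:ber}, which identifies the exit distribution of $\Q^{B_d}$ from $z_0$ as $\prod_{i=1}^d\mathrm{Ber}(\cdot\mid z_{0,i})$, so $\Q^{B_d}(Z_\tau\in\Omega)=\sum_{e\in\Omega}\prod_{i=1}^d\mathrm{Ber}(e_i\mid z_{0,i})$; since $z_0\in(0,1)^d$ each factor $\mathrm{Ber}(e_i\mid z_{0,i})=e_iz_{0,i}+(1-e_i)(1-z_{0,i})$ lies in $(0,1)$ and, $\Omega$ being nonempty, the sum is strictly positive. Combining the two bounds gives $\E_{\Q^0}[\tau^0]\le\frac{d/4}{\sum_{e\in\Omega}\prod_i\mathrm{Ber}(e_i\mid z_{0,i})}<+\infty$, which is the claim.

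I expect the only delicate point to be the setup itself: justifying $\tau^0=\bar\tau$ under $\Q^0$ and that conditioning on $\{Z_\tau\in\Omega\}$ has a constant normalizer (which is where the hypothesis $Z_0\in(0,1)^d$ enters; if the model were instead initialized from a distribution supported on the open cube, one would condition on $Z_0$ first and then integrate, using that the bound above is uniform in $z_0$ over any compact subset of $(0,1)^d$). A fully self-contained alternative avoiding the conditioning picture would be a Lyapunov argument --- exhibit a bounded $u$ on $[0,1]^d$ with $\mathcal Lu\le-1$ on $(0,1)^d$ for the generator $\mathcal L$ of $\Q^0$, so that $u(Z_t)+t$ is a supermartingale and $\E[\tau^0]\le u(Z_0)-\inf u$ --- but constructing such a $u$ explicitly against the $\dd_z\log\mathrm{Ber}(\Omega\mid\cdot)$ drift is messier than the conditioning route, so I would not pursue it.
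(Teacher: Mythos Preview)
Your approach is correct and follows the same route as the paper: both recognize that $\Q^0$ is the Boolean-hitting process $\Q^{B_d}$ conditioned on $\{Z_\tau\in\Omega\}$, so that $\E_{\Q^0}[\tau^0]$ reduces to the conditional expectation of the $\Q^{B_d}$-exit time given that the exit vertex lands in $\Omega$. Your version is in fact more complete than the paper's --- where the paper simply asserts this conditional expectation is finite ``due to the diffusion nature of Brownian motion,'' you supply the explicit bound $\E_{\Q^0}[\tau^0]\le (d/4)/\mathrm{Ber}(\Omega\mid z_0)$ via optional stopping on each coordinate and Proposition~\ref{thm:ber}.
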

\begin{proof}
Consider the following two processes 
starting from the same deterministic initialization $Z_0  = Y_0 = z_0\in (0,1)^d$: 
\bb
& \Q^0 \colon \d Z_t = \ind(Z_t\in(0,1)) \circ 
\left (   \dd_{z}\log \mathrm{Ber}(\Omega~|~Z_t) \dt + \d W_t \right) \\
& \Q^* \colon \d Y_t = \ind(Y_t\in(0,1)) \circ 
\left (  \d W_t \right).
\ee
Denote by $\tau^0$ and $\tau^*$ the corresponding hitting times to $\Omega$, that is,  $\tau^0 = \inf \{t \colon Z_t \in \Omega\}$, 
and $\tau^* = \inf \{t \colon Y_t \in \Omega\}$.  

Then we know from $h$-transform that $\Q^0$ is the conditioned process of $\Q^*$ given that $Z_\tau \in \Omega$, that is, 
$
\Q^0 = \Q^*(\cdot ~|~ Z_\tau \in \Omega)$.

Therefore, the first hitting time $\tau^0$ of $\Q^0$ has the same law as that of $\tau^* ~|~ Y_\tau \in \Omega$, that is, $\Q^0 (\tau^0 \in A) = \Q^*(\tau^* \in A ~|~ Y_\tau \in \Omega)$ for any measurable set $A\subseteq [0, +\infty)$. 

But we know that $\E[\tau^*~|~ Y_\tau \in \Omega] <+\infty$ due to the diffusion nature of Brownian motion. Hence, $\E[\tau^0] = \E[\tau^*~|~ Y_\tau \in \Omega] < +\infty$. 
\end{proof}
\end{proofEnd}
See Appendix \ref{apx:cate} for the summary of the algorithm for learning categorical data.

\subsubsection{Fast Sampling of Bridges} \label{sec:fast}  

One main step in calculating the loss $\L(\theta)$ is to draw trajectory $Z$ from the bridge $\Q^x = \Q(\cdot ~|~ Z_\tau = x).$ 
This can be achieved by simulating the bridge processes
using Euler–Maruyama method. 
This is not computationally costly because 
it is the simulation of elementary SDEs and does not involve deep neural networks.  
However, it does cause a slow down in the training algorithm  if the data $x$ is very high dimensional and the data size is very large. 

\begin{wrapfigure}{r}{.35\textwidth} 
    \centering
    \includegraphics[width=0.35\textwidth]{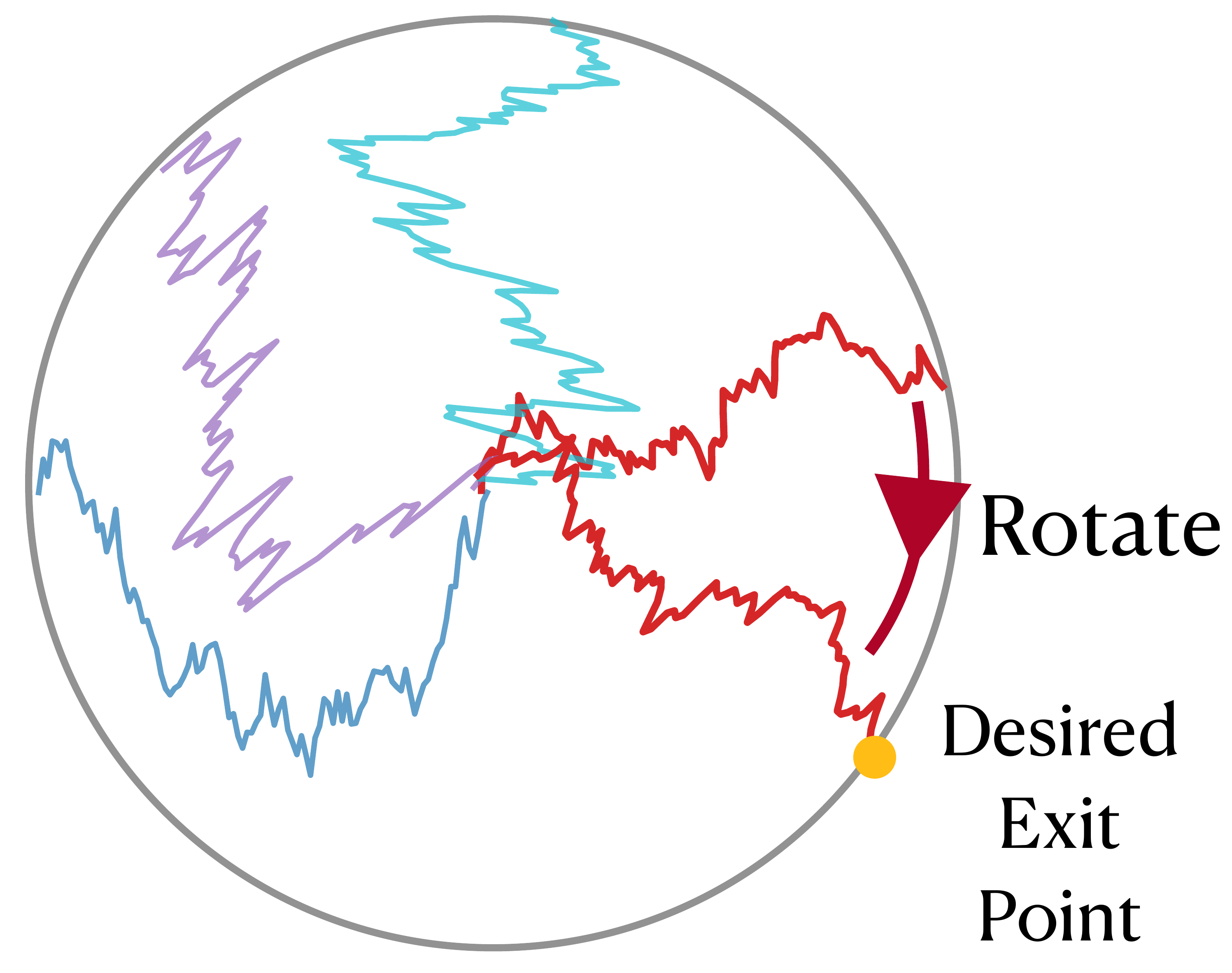}
\vspace{-2\baselineskip}
    \caption{ \small{To sample a conditioned process, we can pick up a trajectory of unconditioned process and rotate it so that it exits at a given point.}} 
    \label{fig:rot}
\end{wrapfigure}

To speed up the training, we propose a fast algorithm for simulating bridges by exploiting the symmetry 
when we initialize from a point $z_0$  (e.g., the center of sphere) 
around which $\Omega$ and $\Q$ are  rotational symmetric.  
 The idea is simple: 
we simulate the \emph{unconditioned} process $\Q$ to get a trajectory $Z$ that exits at any point. Then, to obtain the conditional process $\Q(\cdot | Z_\tau = x)$, 
we simply rotate the trajectory $Z$ such that the exit point $Z_\tau$ is transformed from the original one to $x$.  
An advantage is that we can pre-simulate a large number of trajectories before training and only need to apply the rotation operator to get specific conditioned processes during training. Figure~\ref{fig:rot} gives an illustration using the example of sphere hit. The idea can be applied similarly for other types of hitting.

\begin{pro}
Assume $Z$ with law $\Q$ initialized from $z_0\in \RR^d$ is absorbing to $\Omega$. 
For $x,x'\in \Omega$, 
let $\mathtt{rot}_{x'\to x}$ be the rotation operator around $z_0$  that transforms $x'$ to $x$ (hence  $\mathtt{rot}_{x'\to x}(x') =x$).
Assume that $\Omega$ and $\Q$ are rotation invariant 
around $z_0$ in that $\mathtt{rot}_{x',x}(\Omega) = \Omega$ and $\mathtt{rot}_{x',x}(Z)\sim \Q$ when $Z\sim \Q$ for any $x',x\in \Omega$. 
Then if $Z\sim \Q$, we get $Z' = \mathtt{rot}_{Z_\tau \to x} (Z)$, a sample drawn from $\Q(\cdot | Z_\tau = x).$
\end{pro}

Such a fast sampling approach is applicable for all the categorical, sphere and binary distributions.

\subsection{Discretization Error}
In practice, the Euler-Maruyama method is applied to discretize the process. Analyzing the discretization error of a random hitting process is more difficult than that of a fixed-time process. In a fixed-time process, both discretized and continuous processes terminate at the same time, making the coupling tricks applicable for analyzing the discretization error based on the $\ell_2$ Wasserstein distance. Standard analysis under Lipschitz continuity assumption of the drifts gives $O(\Delta)$ error rate where $\Delta$ is the discretization step size \citep{sarkka2019applied}. In comparison, the key challenge of analyzing the FHDM is that the discretized and continuous processes may not terminate at the same time, and 
thus we need to bound the probability of the difference of the hitting time distribution in the analysis. Besides, in practice, we might also apply some time truncation tricks in order to have a bounded waiting time for generating. In Appendix \ref{apx:discret}, we provide a full analysis and show that FHDM also yields $O(\Delta)$ discretization error asymptotically.

\section{Related Work}
\paragraph{Diffusion Generative Model on Different Domains}
Diffusion generative model has been demonstrated to be powerful in generation of general continuous data such as image \citep{sohl2015deep, song2019generative, ho2020denoising, song2020improved, song2020score, dhariwal2021diffusion}, point cloud shape \citep{cai2020learning,luo2021diffusion,zhou20213d} and audio \citep{chen2020wavegrad, kong2020diffwave}. Recently, diffusion generative model has also been extended to learn to generate data on special domains such graph \citep{niu2020permutation}, segmentation map \citep{hoogeboom2021argmax}, text \citep{hoogeboom2021argmax} and manifold data \citep{de2022riemannian}. Such a generalization of diffusion model is usually case-by-case and is based on applying constraints to ensure the data remains in the desired domain during the diffusion process \citep{hoogeboom2021argmax,de2022riemannian} or use heuristic approximation to round the data into the discrete space \citep{niu2020permutation}. Our FHDM gives a unified  framework for generating data on special domain via a completely new mechanism of first hitting. 

\paragraph{Theoretical Framework on Diffusion Process}
Most existing diffusion models are based on the framework of time-reversing \citep{song2020score} in which the generation (i.e. denoising) process is learned based on its time-reversed stochastic differential equation trajectory that can be simulated easily, ignoring the mismatch of the initial distribution. In comparison, our framework is conceptually simpler and is only based on a forward process, in which the learning is based on conditioned stochastic differential equations (i.e., bridge) that can be simulated via $h$-transform. A similar framework is independently explored in~\citep{peluchetti2021non} but our method is more general and exploits the idea of first hitting. Schrodinger bridges is an another well studied framework of diffusion model \citep{wang2021deep,de2021diffusion,peluchetti2021non,chen2021likelihood}. However, using Schrodinger bridges usually require expensive forward-backward algorithms. 
It is also unknown whether or how Schrodinger bridges can be applied for generating data in special domains.

\section{Experiments}
We applied FHDM to distributions on various domains such as point cloud (general continuous distribution), distribution of climate and geography events on earth (continuous distribution on the sphere), unweighted graphs (distribution of binary matrices), and segmentation map of 2D image (high dimension categorical distribution). We demonstrate that

$\bullet$     1. As a generalization of the fixed-time processes such as DDPM, the fixed-time scheme of FHDM is a generative model of higher quality for general continuous distribution (section \ref{sec:pointcloud}).
     
$\bullet$     2. As a versatile model, FHDM is able to learn the distribution in many different domains and it outperforms existing specifically designed generative models (see section \ref{sec:exp_sphere}).
     
$\bullet$     3. The hitting time of FHDM is well-bounded and in several tasks, FHDM even requires much fewer diffusion steps than existing methods while generating  higher quality samples (see section \ref{sec:analysis}).

Besides, we also conduct experiments to understand the intuition of the first time hitting mechanism (section \ref{sec:analysis}) and demonstrate the acceleration of the fast sampling approach introduced in Section \ref{sec:fast} (see in Appendix \ref{apx:exp_res}). We include the visualization of the generated samples in Appendix \ref{apx:vis}. Please find the code at \url{https://github.com/lushleaf/first_hitting_diffusion}.
\subsection{Generation Experiment}
\paragraph{Point Cloud Generation} \label{sec:pointcloud}
Following \citet{luo2021diffusion}, we employ the ShapeNet dataset \citep{chang2015shapenet} to evaluate the generated point cloud. We compare our approach against several the state-of-the-art generative models including PC-GAN \citep{achlioptas2018learning}, GCN-GAN \citep{valsesia2018learning}, Tree-GAN \citep{shu20193d}, PointFlow \citep{yang2019pointflow}, ShapeGF \citep{cai2020learning} and DPM \citep{luo2021diffusion}. See Appendix \ref{apx:exp_det} for training details.
Following \citet{cai2020learning,luo2021diffusion}, we use minimum matching distance (MMD) and the coverage score (COV) paired with Chamfer distance as well as 1-NN classifier accuracy and the Jenson-Shannon divergence (JSD) to evaluate the quality of the generated point cloud. We refer readers to Appendix \ref{apx:exp_det} for more details on the metrics. Same to \citet{cai2020learning, luo2021diffusion}, we evaluate the quality on two categories, Airplane and Chair and the generated and reference point clouds are normalized into a bounding box of $[-1,1]^3$ at evaluation. Table \ref{tbl:pc} summarizes the results showing that \ours achieves the best performance on most criterion. 

\begin{table}
\centering{}%
\begin{tabular}{l|cccc|cccc}
\specialrule{.15em}{.07em}{.07em} 
\multirow{2}{*}{Model} & \multicolumn{4}{c|}{Airplane} & \multicolumn{4}{c}{Chair}\tabularnewline
\cline{2-9} \cline{3-9} \cline{4-9} \cline{5-9} \cline{6-9} \cline{7-9} \cline{8-9} \cline{9-9} 
 & MMD$\downarrow$ & COV$\uparrow$ & 1-NNA$\downarrow$ & JSD$\downarrow$ & MMD$\downarrow$ & COV$\uparrow$ & 1-NNA$\downarrow$ & JSD$\downarrow$\tabularnewline
\hline 
PC-GAN \citep{achlioptas2018learning} & 3.819 & 42.17 & 77.59 & 6.188 & 13.436 & 46.23 & 69.67 & 6.649\tabularnewline
GCN-GAN \citep{valsesia2018learning} & 4.713 & 39.04 & 89.13 & 6.669 & 15.354 & 39.84 & 77.86 & 21.71\tabularnewline
Tree-GAN \citep{shu20193d} & 4.323 & 39.37 & 83.86 & 15.646 & 14.936 & 38.02 & 74.92 & 13.28\tabularnewline
PointFLow \citep{yang2019pointflow} & 3.688 & 44.98 & 66.39 & 1.536 & 13.631 & 41.86 & 66.13 & 12.47\tabularnewline
ShapeGF \citep{cai2020learning} & 3.306 & \pmb{50.41} & \pmb{61.94} & 1.059 & 13.175 & 48.53 & \pmb{56.17} & 5.996\tabularnewline
DPM\citep{luo2021diffusion} & \pmb{3.276} & 48.71 & 64.83 & 1.067 & 12.276 & 48.94 & 60.11 & 7.797\tabularnewline
\rowcolor{lightgray}Ours & 3.350 & \pmb{50.41} & 67.21 & \pmb{0.986} & \pmb{6.644} & \pmb{49.50} & 56.87 & \pmb{5.913} \tabularnewline
\specialrule{.15em}{.07em}{.07em} 
\end{tabular}\caption{{Result of point cloud generation experiment. We adopt the base line from \citet{luo2021diffusion}. Bolded value indicates the best performance method.}} \label{tbl:pc}
\end{table}

\paragraph{Generating Distribution on Sphere} \label{sec:exp_sphere}
We apply \ours to generate distribution of occurrences of earth and climate science events on the surface of earth (which is approximated as a perfect sphere). Following \citet{de2022riemannian}, we consider 4 datasets: volcanic eruption \citep{NGDC_vol}, earthquakes \citep{NGDC_quake}, floods \citep{Brakenridge} and wild fires \citep{EOSDIS}. We compared \ours against the current the state-of-the-art baselines including Riemannian Continuous Normalizing Flows \citep{mathieu2020riemannian}, Moser Flows \citep{rozen2021moser}, mixture of Kent distributions \citep{peel2001fitting} and standard Score-Based Generative model on 2D plane followed by the inverse stereographic projection (Stereographic Score-Based) \citep{gemici2016normalizing} and Riemannian Generative Model \citep{de2022riemannian}.
Same to \citet{de2022riemannian}, we evaluate the method via the negative log-likelihood on the test set. We run our method for 5 independent trials and report the averaged metric with its standard deviation. We directly adopt the baseline result from \citet{de2022riemannian}. Table \ref{tbl:sphere} summarizes the result. See Appendix \ref{apx:exp_det} for additional details. 

\begin{table}
\begin{centering}
\begin{tabular}{l|llll}
\specialrule{.15em}{.07em}{.07em} 
 & Volcano & Earthquake & Flood & Fire\tabularnewline
\hline 
Mixture of Kent \citep{peel2001fitting} & $-0.80\pm0.47$ & $\ \ \ 0.33\pm0.05$ & $0.73\pm0.07$ & $-1.18\pm0.06$\tabularnewline
Riemannian CNF \citep{mathieu2020riemannian} & $-0.97\pm0.15$ & $\ \ \ 0.19\pm0.0.4$ & $0.90\pm0.03$ & $-0.66\pm0.05$\tabularnewline
Moser Flow \citep{rozen2021moser} & $-2.02\pm0.42$ & $-0.09\pm0.02$ & $0.62\pm0.04$ & $-1.03\pm0.03$\tabularnewline
Stereographic Score-based  \citep{gemici2016normalizing} & $-4.18\pm0.30$ & $-0.04\pm0.11$ & $1.31\pm0.16$ & $\ \ \ 0.28\pm0.20$\tabularnewline
Riemannian Score-based \citep{de2022riemannian} & $\pmb{-5.56\pm0.26}$ & $-0.21\pm0.03$ & $0.52\pm0.02$ & $\pmb{-1.24\pm0.07}$\tabularnewline
\rowcolor{lightgray}Ours & $-1.25\pm0.18$ & $\pmb{-0.27\pm0.02}$ & $\pmb{0.29\pm0.03}$ & $\pmb{-1.24\pm0.08}$ \tabularnewline
\specialrule{.15em}{.07em}{.07em} 
\end{tabular}\caption{{Result on generating distribution of occurrences of earth and climate science events on the surface of earth. Bolded value indicates the best method.}} \label{tbl:sphere}
\par\end{centering}
\end{table}

\begin{table}[t]
\centering{}%
\begin{tabular}{l|lllllllll}
\specialrule{.15em}{.07em}{.07em} 
{\multirow{2}{*}{Method}} & \multicolumn{4}{c}{Community-small} & \multicolumn{4}{c}{Ego-small} & \multirow{2}{*}{Avg.}\tabularnewline
\cline{2-9} \cline{3-9} \cline{4-9} \cline{5-9} \cline{6-9} \cline{7-9} \cline{8-9} \cline{9-9} 
 & Deg. & Clus. & Orbit. & Avg. & Deg. & Clus. & Orbit. & Avg. & \tabularnewline
\hline 
GraphVAE \citep{simonovsky2018graphvae} & 0.350 & 0.980 & 0.540 & 0.623 & 0.130 & 0.170 & 0.050 & 0.117 & 0.370\tabularnewline
DeepGMG \citep{li2018learning} & 0.220 & 0.950 & 0.400 & 0.523 & 0.040 & 0.100 & 0.020 & 0.053 & 0.288\tabularnewline
GraphRNN \citep{you2018graphrnn} & 0.080 & 0.120 & 0.040 & 0.080 & 0.090 & 0.220 & 0.003 & 0.104 & 0.092\tabularnewline
GNF \citep{liu2019graph} & 0.200 & 0.200 & 0.110 & 0.170 & 0.030 & 0.100 & \pmb{0.001} & 0.044 & 0.107\tabularnewline
EDP-GNN\citep{niu2020permutation} & 0.053 & 0.144 & 0.026 & 0.074 & 0.052 & 0.093 & 0.007 & 0.050 & 0.062\tabularnewline
\rowcolor{lightgray}Ours & \pmb{0.009} & \pmb{0.105} & \pmb{0.009} & \pmb{0.041} & \pmb{0.019} & \pmb{0.040} & 0.005 & \pmb{0.021} & \pmb{0.031} \tabularnewline
\specialrule{.15em}{.07em}{.07em}
\end{tabular}\caption{{Result on graph generation experiment. We report the averaged performance of our approach based on 5 independent runs, giving 0.0013 standard deviation of the averaged metric. The results of the other baselines are directly adopted from \citet{niu2020permutation}. Bolded value indicates the best method.}} \label{tbl:graph}
\end{table}

\begin{figure*}[t]
    \centering
    \includegraphics[width=\textwidth]{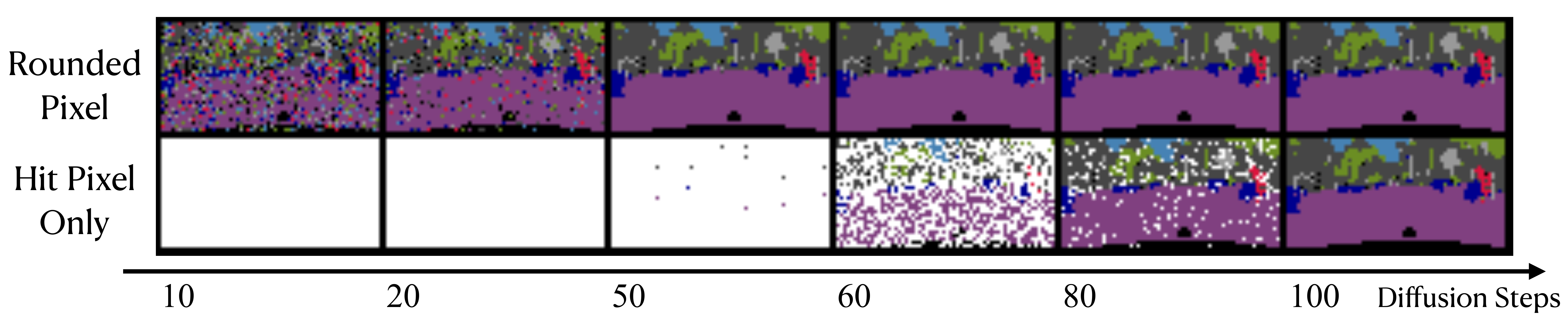}
    \vspace{-0.8cm}
    \caption{\small{The trajectory of generating a segmentation map image. The upper row shows the image where the category of all pixels are decided based on the argmax (i.e., rounding) of all the 8 scores. The lower row only plots the hit pixels of the snapshots.}} 
    \label{fig:hit_traj}
    \vspace{-0cm}
\end{figure*}

\paragraph{Graph Generation} \label{sec:exp_graph}
We apply \ours to generate (unweighted) graph that can be represented using binary adjacency matrix. Following the experiment setup in \citet{you2018graphrnn,liu2019graph,niu2020permutation}, we compare methods on two widely used benchmark datasets, Community-small and Ego-small. We apply the EDP-GNN \citep{niu2020permutation} that preserves the node permutation invariance to approximate the drift. We compare \ours against GraphRNN \citep{you2018graphrnn}, GNF \citep{liu2019graph}, GraphVAE \citep{simonovsky2018graphvae} and DeepGMG \citep{li2018learning}.
The maximum mean discrepancy (MMD) over three graph statistics (1. degree distribution; 2. cluster coefficient distribution; 3. the number of orbits with 4 nodes) proposed by \citet{you2018graphrnn} is used to evaluate the quality of the generative graphs. For our approach, we run 5 independent trails and report the averaged performance. See Appendix \ref{apx:exp_det} for additional training details. Table \ref{tbl:graph} summarizes the result, suggesting considerable improvement over the baselines.

\begin{wraptable}{r}{.53\textwidth}
\begin{centering}
\vspace{-1cm}
\begin{tabular}{l|ll}
\specialrule{.15em}{.07em}{.07em} 
Method & ELBO & IWBO\tabularnewline
\hline 
Round / Unif \citep{uria2013rnade} & 1.010 & 0.930\tabularnewline
Round / Var \citep{{ho2019flow++}} & 0.334 & 0.315\tabularnewline
Argmax / Softplus thres. \citep{hoogeboom2021argmax} & 0.303 & 0.290\tabularnewline
Argmax / Gumbel dist. \citep{hoogeboom2021argmax} & 0.365 & 0.341\tabularnewline
Argmax / Gumbel thres. \citep{hoogeboom2021argmax} & 0.307 & 0.287\tabularnewline
Multinomial Diffusion \citep{hoogeboom2021argmax} & 0.305 & -\tabularnewline
\rowcolor{lightgray}Ours & \pmb{0.066} & \pmb{0.065} \tabularnewline
\specialrule{.15em}{.07em}{.07em} 
\end{tabular}\caption{\small{Result for segmentation map generation. We run our method for 5 independent runs and report the averaged performance. \ours gives 0.003/0.006 standard deviation of ELBO/IWBO.}} \label{tbl:cate}
\par
\vspace{-0.2cm}
\end{centering}
\end{wraptable}

\paragraph{Segmentation Map Generation} \label{sec:exp_seg}
\ours can also be applied to generate high dimensional categorical distribution such as the segmentation map of a 2D image. Following \citet{hoogeboom2021argmax}, we aim to learn a model to generate the segmentation map of cityscapes dataset, in which the value of each pixel represents the category of the object that pixel belongs to. Following the setup in \citet{hoogeboom2021argmax}, 
there are in total 8 categories and the value at each pixel is coded using one-hot vector. We compare our approach with uniform dequantization \citep{uria2013rnade}, variational dequantization \citep{ho2019flow++}, three variants of argmax flow \citep{hoogeboom2021argmax} and multinomial diffusion \citep{hoogeboom2021argmax}.
Following \citet{hoogeboom2021argmax}, we evaluate the quality of generative model by evidence lower bound (ELBO) and importance weighted bound (IWBO) \citep{burda2015importance} (when it is available) with 1000 samples measured in bits per pixel. For our method, we run 5 independent trials and report the averaged metric and its standard deviation. The other baselines are directly adopted from \citet{hoogeboom2021argmax}. The result is summarized in Table \ref{tbl:cate}. See Appendix \ref{apx:exp_det} for additional details.

\subsection{Analysis} \label{sec:analysis}
\paragraph{Hitting time distribution}

We study the hitting time distribution given by the optimized network, which is summarized in Figure \ref{fig:hit_hist}. Our first hitting diffusion model is able to hit the domain in a well-bounded time. It is worth remarking that for Boolean and categorical distribution, \emph{\ours generates higher quality samples with much fewer diffusion steps.} For example, in graph generation, \ours on average takes about 100 steps while the previous approach such as \citet{niu2020permutation} requires 6K steps. Similarly, in segmentation map generation, \ours takes about 90 steps on average while the multinomial diffusion \citep{hoogeboom2021argmax} needs 4K steps. Decreasing the number of diffusion steps in those approaches will degenerate the performance. For example, if we only use 120 diffusion steps in \citet{niu2020permutation} the averaged performance becomes 0.306 which is much worse. See Appendix \ref{apx:exp_res} for detailed result.

\begin{wrapfigure}{r}{0.3\textwidth}
\vspace{-0.7cm}
    \centering
    \includegraphics[width=0.3\textwidth]{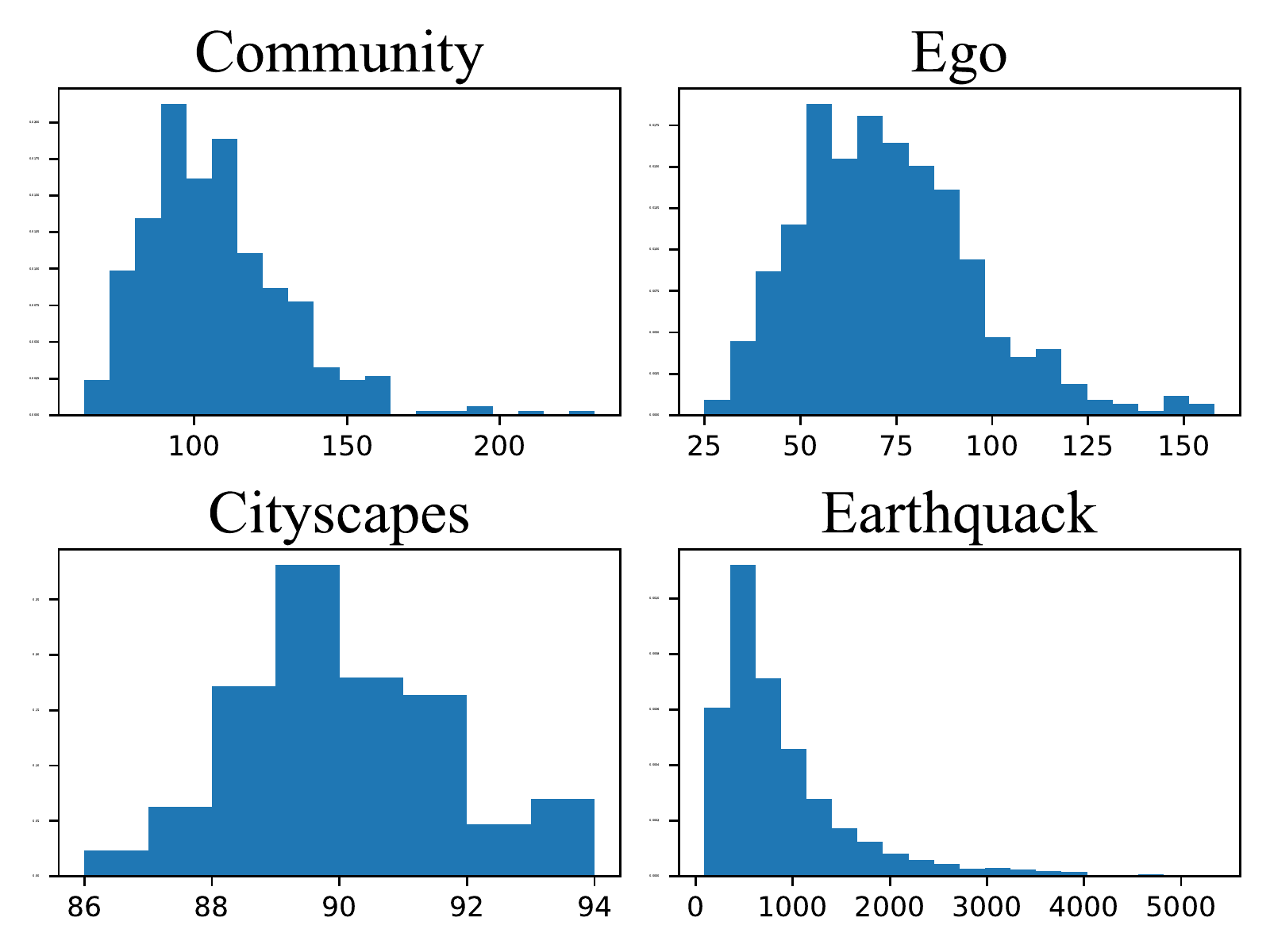}
    \vspace{-0.7cm}
    \caption{{Hitting time distributions for different data distributions.}} 
    \vspace{-0.3cm}
    \label{fig:hit_hist}
\end{wrapfigure}

\paragraph{Why we can stop at hitting time}
The key feature of \ours that is we stop the diffusion when it hits the domain rather than keep it running for a pre-fixed time. We explore more the intuition behind such a process. In figure \ref{fig:hit_traj} we visualize the trajectory of generating a segmentation map. By looking at the image snapshot in the upper row where the value of each pixel is decided by the argmax (i.e. rounding) of the 8 scores, we observe that the global contour of the image is already determined at a very early time (i.e., step 20) while the refinement of local details is almost finished at step 50. Our first hitting model exploits such property to stop the diffusion of the hit pixels that the model has enough confidence about its value making the generating process of the rest pixels easier.

\section{Conclusion}

We propose the first hitting diffusion model (FHDM), which generalizes the fixed-time diffusion process and allows instance-dependent adaptive diffusion steps. Leveraging the idea of exit distribution, FHDM provides an unified framework for learning distribution in various special domains. Despite the good functionality, FHDM takes slightly larger training overhead, which is partially solved by the our fast sampling tricks.

\clearpage
\bibliographystyle{plainnat}
\bibliography{diffusion_generative}

\newpage \clearpage 
\appendix
\section{Appendix} 

\subsection{Algorithm for Learning Categorical Data} \label{apx:cate}
The over all training algorithm for learning categorical generative models is similar to the other cases. To simulate the conditioned process, given any exit point $x\in C_{d,m}\subseteq B_{d}$, we know that $\mathbb{Q}^{C_{d,m}}(\cdot\mid Z_{\tau}=x)=\mathbb{Q}^{B_d}(\cdot\mid Z_{\tau}=x)$ and thus (\ref{equ:Xtrsphere}) can be reused. The training of the network is also similar, the only difference is that we have the additional term $\nabla_{z}\log\text{Ber}(\Omega\mid Z_{t})$ in the output of the network to ensures that the generative process in proposition \ref{thm:feq} is guaranteed to hit $\Omega$ when it exists $V$. The training loss is thus
\begin{align*}
\mathcal{L}(\theta) & =\frac{1}{2}\mathbb{E}_{\substack{x\sim\Pi^{*} \\ Z\sim\mathbb{Q}^{x}}}\Big[\int_{0}^{\tau}\left\Vert \mathbb{I}\{Z_{t}\in C_{d,m}\}\circ\left(f_{t}^{\theta}(Z_{t})+\nabla_{Z_{t}}\log\text{Ber}(\Omega\mid Z)-\nabla_{Z_{t}}\log\text{Ber}(Z_{t}\mid x)\right)\right\Vert ^{2}dt
\\
& -\log p_{0}^{\theta}(Z_{0})\Big]+const.
\end{align*}

\subsection{Practical Algorithm}
We give a detailed practical algorithm. 

\textbf{Discretized process}
Suppose that the diffusion step size at step $k$ is $\epsilon_k$. Given the exit point $x$, the discretized conditioned process can be simulated by 
\bbb \label{equ:bridgeXh_dis}
\X_{t_{k+1}} = \left (b_{t_{k}}(\X_{t_{k}}) + \blue{\sigma^2_{t_{k}}(\X_{t_k}) \dd_{z} \log h^\tg_{t_{k}}(\X_{t_{k}}) } \right ) \epsilon_k + \sqrt{\epsilon_k}\sigma_{t_{k}}(\X_{t_{k}}) \xi_k, ~~~ 
\X_{t_0} \sim \Q^\tg_0,
\eee
where $\xi_k \sim \mathcal{N}(0,I)$ is a standard Gaussian noise. Note that (\ref{equ:bridgeXh_dis}) is terminated at $t_k$ when $\X_{t_k}$ firstly hits the desired domain.
Alternatively, we can first sample (discretized) unconditioned process by 
\bbb \label{equ:uncond_bridge_dis}
\X_{t_{k+1}} = b_{t_{k}}(\X_{t_{k}}) \epsilon_k + \sqrt{\epsilon_k}\sigma_{t_{k}}(\X_{t_{k}}) \xi_k, ~~~ 
\X_{t_0} \sim \Q^\tg_0,
\eee
And then apply the rotation operators defined in Section \ref{sec:fast} such that the sampled trajectory ends at $x$.

\textbf{A simplified loss}
Similar to \citet{song2019generative,ho2020denoising}, we use a stochastic version of loss (\ref{equ:mainloss}), in which we only uniformly sample temporal snapshots to compute the loss.
 \bbb \label{equ:mainloss_sto} 
  \hat{\L}(\theta) 
 & =  \frac{1}{2} \E_{\Q^\tg} \E_{t\sim\text{Unif}\{0,...,\tau\}}
 \!\!\!\left [ \norm{
 \sigma_t(\X_t)^{-1}(s^\theta_t(\X_t) -
 b_t(\X_t~|~ \X_\tau))
 }^2 - \log p_0^\theta(\X_0) \right ] + \const.
\eee

In Algorithm \ref{alg:learningmain_prac}, we summarize the training procedure of FHDM.

\begin{algorithm}[h] 
\label{alg:learningmain_prac}
\caption{Learning Generative Models by First Hitting Diffusion} 
\begin{algorithmic}
\STATE \textbf{Inputs}: A data $\{x\datai\}$ drawn from $\tg$ on $\Omega$.
A baseline process $\Q$ and a model $\P^\theta$ that are absorbing to $\Omega$. 
\STATE \textbf{Goal}: Find $\theta$ such that  $\P_\Omega^\theta \approx \tg$. 
\STATE \textbf{Training}: 
By minimizing $\L(\theta)$.

\STATE{(Optional) Pre-simulate unconditioned trajectories of $\mathbb{Q}$ using (\ref{equ:uncond_bridge_dis}).}
\FOR{training iters}
    \STATE{Get a mini batch of data from training set.}
    \STATE{//Optionally, we can use fast bridge sampling tricks to get conditioned sample by rotating ~// pre-simulated unconditioned trajectories.}
    \STATE{Sample trajectories $\mathbb{Q}(\cdot \mid Z_\tau = x)$ for each data $x$ in the mini batch using (\ref{equ:bridgeXh_dis})}
    \STATE{Calculate the mini-batch loss $\mathcal{L}(\theta)$ defined in Equ~(\ref{equ:mainloss_sto}).}
    \STATE{Apply gradient descent to update $\theta$.}
\ENDFOR

\end{algorithmic}
\end{algorithm} 

\subsection{Sampling with first hitting $h$-transform} \label{apx:sample}

The $h$-transform formula on first hitting diffusion readily provides a simple mechanism for approximate sampling from $\tg$: Assume the baseline process $X$ is designed simple enough such that the conditional harmonic measure $\Q_{\Omega}(\cdot ~|~ Z_t=z)$ is easy to calculate, then we can approximately $h_t^{\tg}(z)$ in \eqref{equ:barh} 
 by Monte Carlo sampling from  $\Q_{\Omega}(\cdot ~|~ Z_t=z)$:
 $$
 h_t^\tg(z) \approx \frac{1}{m}\sum_{i=1}^m \pi^*(x\datai),\ \ 
 x\datai \sim \Q_\Omega(\cdot ~|~ Z_t = z),
 $$ 
 use it simulate process \eqref{equ:bridgeXh}. 
 The gradient $\dd \log h^\tg_t$ can be approximated with either the reparameterization method or score function method. 
 See Algorithm~\ref{alg:sampling}. 
\begin{algorithm}[h]
\label{alg:sampling}
\caption{Approximate Sampling by First Hitting Diffusion}
\begin{algorithmic}
\STATE \textbf{Goal}: Draw sample from $\tg$ on $\Omega \in \RR^d$. 
\STATE \textbf{Prepare} a baseline diffusion process $\X \sim  \itoo(b,\sigma)$ in  \eqref{equ:baseX} with exit distribution  $\Q_\Omega (A ~|~ \Z_t = z) = \Q(\X_\tau\in A ~|~\Z_t=z)$. %
 Let $h = {\df \tg}/{\df \Q_{\Omega}(\cdot| \Z_0= z_0)}$ be the density ratio between $\tg$ and $\Q_{\Omega}(\cdot| Z_0= z_0)$, 
 where the initialization $Z_0 = z_0$ is in $V\setminus \Omega$. %
\STATE \textbf{Simulate} the following process $\{\hat Z_t\}$ starting from $\hat Z_0= z_0$ and  stop at the first hitting time $\tau = \inf \{t\geq 0 \colon \hat Z_t\in\Omega\}$: 
\bbb \label{equ:bridgeXhat}
\df \hat \X_t = \left (b_t(\hat \X_t) + \sigma_t^2(\hat \X_t) \dd_{z} \log\hat h_t(\X_t)  \right ) \dt + \sigma_t(\hat \X_t) \df W_t,
\eee
where $\hat h_t(z) = \frac{1}{m}  \sum_{i = 1}^m \tgd(x\datai),$ 
where $\{x\datai\}_{i=1}^m$ is drawn i.i.d. from $\Q_\Omega (\cdot ~|~ \Z_t = z)$; the derivative $\dd_{z} \log\hat h_t (z)$ can be calculated by either the reparameterization trick or score function method.  
\STATE \textbf{Return} $\hat \X_\tau$ as an approximate draw from $\tg$. 
\end{algorithmic}
\end{algorithm} 

\subsection{Connection with SMLD and DDPM} \label{apx:compare}
Standard diffusion generative models such as SMLD and DDPM determinates the diffusion process at a fixed time, which can be included as a special first hitting model as shown in Example~\ref{exa:fixedtime}. 
We clarify the connection to SMLD and DDPM for completeness here. 
In this case, we set $\Q$ to be an Ornstein-Uhlenbeck (O-U) process $\d Z_t = \alpha_t \Z_t \dt + \sigma_t \d W_t$ initialized at $Z_0 \sim \normal(\mu_0, v_0)$ and stopped at a deterministic time $t = \t$,  where $\alpha_t\in \RR$ and  $\sigma_t\geq 0$, $v_0\geq 0$, $\forall t$.
This is a Gaussian process. Let $Z_t \sim\normal(\mu_t, v_t)$. 
Denote by $\bar Z_t = Z_{\t-t}$ the time reversed process, which follows \citep{anderson1982reverse}
\bb 
\d \bar Z_t 
= \left (- \alpha_{\t-t} \bar Z_t + 
\sigma_{\t-t}^2  
\frac{\mu_{\t-t} - \bar Z_t}{v_{\t-t}}  \right ) \dt + \sigma_{\t-t} \d \bar W_t,
\ee 
where $\bar W_t$ is a copy of standard Brownian motion. 
If we set $v_0 \to +\infty$ in the initial $Z_0$, we expect to have $v_t \to +\infty$ under proper regularity conditions on $\alpha_t$ and $\sigma_t$, the second term in the drift of $\bar Z_t$ is canceled, yielding $\d \bar Z_t = - \alpha_{\t -t} \bar Z_t \dt + \sigma_{\t - t} \d \bar W_t$. 
This then reduces to the  processes used in SMLD ($\alpha_t = 0$), 
and DDPM and SDE method in \cite{song2020score}  ($\alpha_t>0$).
This framework of learning fixed-time diffusion models using bridge processes 
are explored separately in a recent work \cite{peluchetti2021non}. 
The authors devote more in-depth discussions on the fixed-time diffusion case 
in a separate work.  

\subsection{Additional Experiment Details} \label{apx:exp_det}
\subsubsection{Point Cloud Generation}
\textbf{Training details}
The ShapeNet dataset contains 51,127 shapes from 55 categories and is randomly split training, testing and validation set by the ratio 80\%, 15\% and 5\%. For each shape, we sample 2048 points to acquire the point clouds and normalize each of them to zero mean and unit variance.

We build our method on \citet{luo2021diffusion} in which the encoder of a flow-based model is used to learn a latent code of the shape and conditioning on the shape latent code, the point are independently generated based on a diffusion model. We substitute the DDPM-type \citep{ho2020denoising} of diffusion model with ours and all the other components remain the same. Each point is generated using 100 diffusion steps and the step size linearly decays starting from $0.02$ to $10^{-4}$. We use the same network architecture for flow-based model and point diffusion network. We train the model for 1M steps with batch size 128 using Adam optimizer \citep{kingma2014adam}.

\textbf{More Details on Evaluation Metrics}
Both MMD, JSD and 1-NN measures the fidelity of the generated samples. The 1-NN score is the accuracy of 1-NN classifier in predicting whether a point cloud is generated by the model or from the data. Lower 1-NN scores suggest higher quality. MMD and JSD measures the probability distance between the point distributions of the generated set and the reference set from data and thus lower MMD and JSD means higher quality. COV detects mode-collapse and higher COV suggests more diverse generated samples. 

\subsubsection{Generating Distribution on Sphere}
\textbf{Training details}
All datasets are split into training, validation and test sets with $(0.8, 0.1, 0.1)$ proportions. We train the model for 2000 iterations using Adam Optimizer \citep{kingma2014adam} with learning rate $0.05$ and batch size 128. We use a three-layer MLP with 100 hidden units and ReLU activation to approximate the drift. We set the maximum diffusion step as 10K with step size $5\times 10^{-4}$. The model on average takes 1K steps to hit and seldom takes more than 5K to hit. See section \ref{sec:analysis} for more details on the hitting time distribution.

\subsubsection{Graph Generation}
\textbf{Training details}
We set the maximum number of SDE steps as 10K, and it takes on average about 100 steps to hit. See section \ref{sec:analysis} for more detailed analysis. At each step, we set the standard deviation of gaussian noise as 0.5. We initialize all the coordinate 0.5 and stop the updating of a coordinate at the first time its distance to 0 or 1 is less than 0.05. We use the same network architecture and training pipeline as \citet{niu2020permutation}. Adam optimizer \citep{kingma2014adam} with 0.001 learning rate is applied. Batch size is set to 32 and for each graph, we randomly sample 6 snapshot in the trajectory for training. The score matching loss of a hit coordinate is masked out at training.

\subsubsection{Segmentation Map Generation}
\textbf{Training details}
Our network architecture and training pipeline is almost the same as the multinomial diffusion model proposed in \citet{hoogeboom2021argmax}. The only architecture difference is that \citet{hoogeboom2021argmax} first feed the image into an embedding layer before passing to the subquential U-Net \citep{ronneberger2015u} like structure while we use a linear layer with the same output dimension. This is because the multinomial diffusion model \citep{hoogeboom2021argmax} is a discrete diffusion in which the value at each pixel is considered to be discrete while \ours is a continuous diffusion. We set the number of maximum diffusion steps to be 100 and the step size to be 0.1. We apply step-decayed Gaussian noise at different diffusion steps, in which the standard deviation at initial is 1 and decay to half at step 500 and 750. The pixel is hit and stopped to update at the first time its largest categorical score (among 8 of them) is greater than $1-\epsilon$ with $\epsilon=0.01$. We apply the same data augmentation and train the model for 500 epochs with batch size 64, learning rate $10^{-4}$ and Adam optimizer \citep{kingma2014adam}. For each image in the batch, we randomly sample one time snapshot along the diffusion trajectory for training.

For this task, we apply the fast bridge sampling method proposed in Section \ref{sec:fast}. At the beginning of each epoch, we generate $10 \times \text{batch size} \times H \times W$ unconditional SDE trajectories where $H,W$ is the height and width of the images. At the training time, to simulate the SDE trajectories of a given image in the training set, for each pixel, we randomly select one saved unconditional SDE trajectories and rotates it such that it ends at that pixel.

\subsection{Additional Experiment Results} \label{apx:exp_res}
\textbf{Number of diffusion steps}
When we restrict the number of diffusion steps of EDP-GNN \citep{niu2020permutation}, the second-best approach, similar to that of FHDM (120), we observe a significant performance drop. As shown in Table \ref{tbl:steps}, the performance of EDP-GNN degenerates badly when we decrease its diffusion steps from 4K to 120.

\begin{table}
\centering{}%
\resizebox{\columnwidth}{!}{
\begin{tabular}{c|ccccccccc}
\specialrule{.15em}{.07em}{.07em} 
\multirow{2}{*}{Method} & \multicolumn{4}{c}{Community-small} & \multicolumn{4}{c}{Ego-small} & \multirow{2}{*}{Avg}\tabularnewline
\cline{2-9} \cline{3-9} \cline{4-9} \cline{5-9} \cline{6-9} \cline{7-9} \cline{8-9} \cline{9-9} 
 & Deg. & Clus. & Orbit. & Avg. & Deg. & Clus. & Orbit. & Avg. & \tabularnewline
\hline 
EDP-GNN & 0.053 & 0.144 & 0.026 & 0.074 & 0.052 & 0.093 & 0.007 & 0.050 & 0.062\tabularnewline
EDP-GNN (step=120) & 0.586 & 0.253 & 0.705 & 0.515 & 0.141 & 0.114 & 0.036 & 0.097 & 0.306\tabularnewline
\rowcolor{lightgray}Ours & \pmb{0.004} & \pmb{0.104} & \pmb{0.001} & \pmb{0.036} & \pmb{0.019} & \pmb{0.047} & \pmb{0.005} & \pmb{0.024} & \pmb{0.030}\tabularnewline
\specialrule{.15em}{.07em}{.07em} 
\end{tabular}
}
\caption{Comparing FHDM with EDP-GNN with similar diffusion steps.} \label{tbl:steps}
\end{table}

\textbf{Acceleration by fast sampling}
We give brief analysis on the acceleration effect of the fast bridge sampling method described in Section \ref{sec:fast}. When applied to the segmentation generation experiment, we pre-simulate 640 trajectories in the beginning of each training epoch which gives 2.5x acceleration from 24.3 cpu time/epoch to 9.8 cpu time/epoch, making the training time of FHDM is comparable to \citet{hoogeboom2021argmax} (6.5 cpu time/epoch). We remark that although FHDM has slightly larger training overhead, its only requires less than 100 diffusion steps at inference, giving a 40x speed up compared with \citet{hoogeboom2021argmax}.

\textbf{Ablation studies on noise schedule}
\begin{figure*}[t]
    \centering
    \includegraphics[width=\textwidth]{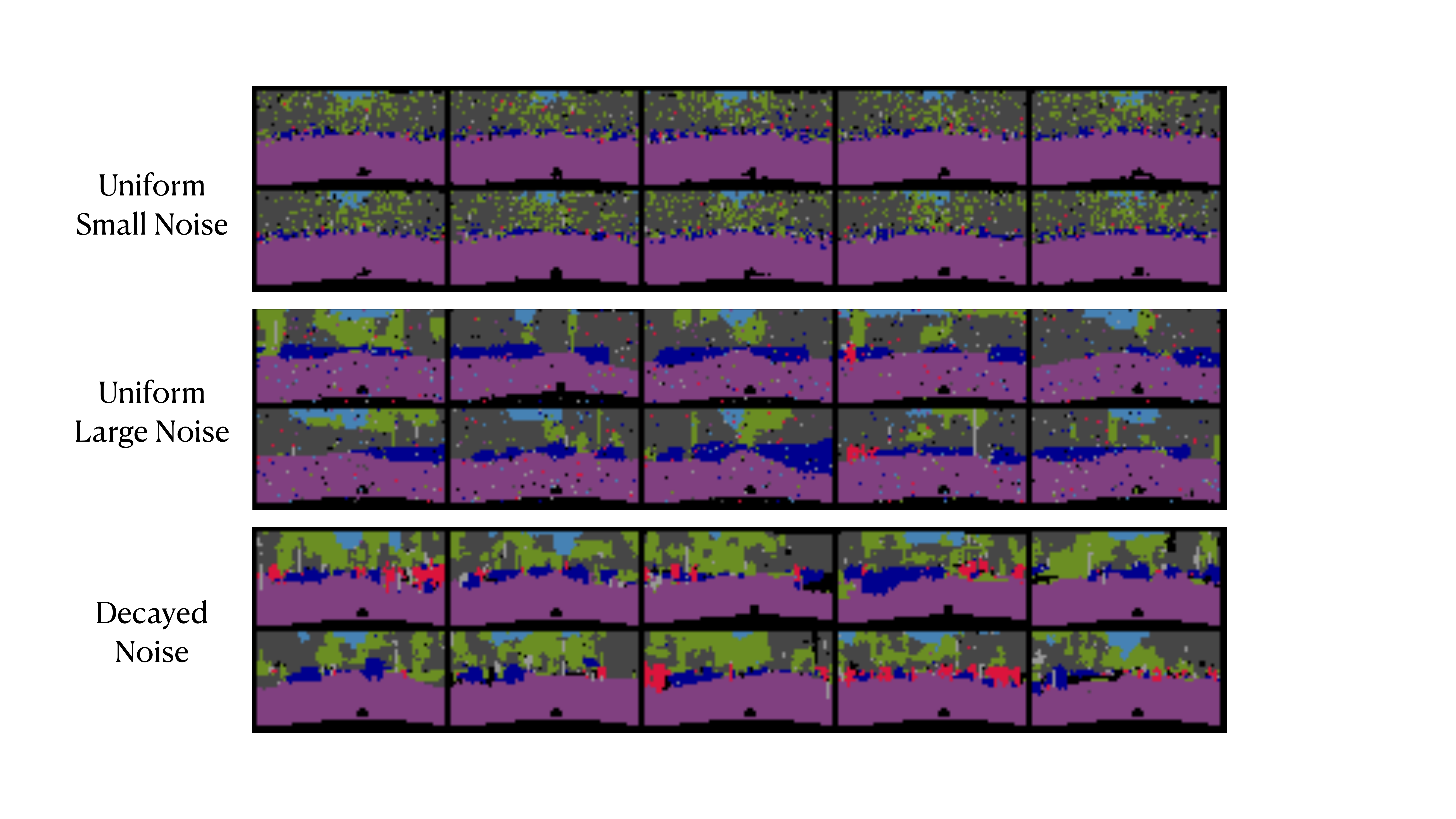}
    \caption{Compare the generated segmentation maps with different noise schedule.}
    \label{fig:compare_noise}
\end{figure*}
In practice, we observe that the design of the noise schedule can be important for some tasks such as segmentation generation. We show samples generated by FHDM with uniformly small noise (std=0.25), uniformly large noise (std=1) and decayed noise as described in Section in \ref{sec:exp_seg}. It is worth noticing that using a uniformly small noise generates over-smoothed and degenerated images that fail to reveal the details while using a uniformly large noise gives more diverse but noisy images. In comparison, the decaying noise generates high-quality diverse images with fine details.

\subsection{Visualization of Generated Samples} \label{apx:vis}

\textbf{Point Cloud Generation}
Please see Figure~\ref{fig:air_all} and \ref{fig:chair_all} for the generated airplane and chair point cloud using FHDM.

\begin{figure*}[t]
    \centering
    \includegraphics[width=\textwidth]{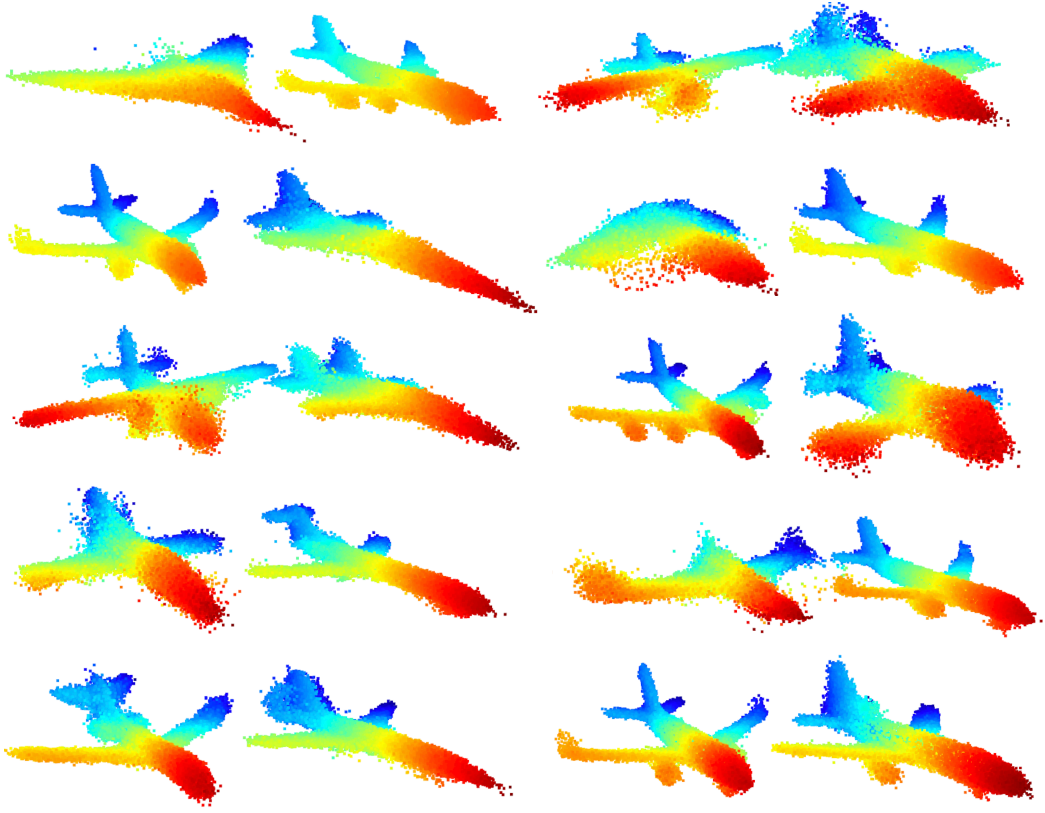}
    \caption{The generated airplane point cloud by FHDM.}
    \label{fig:air_all}
\end{figure*}

\begin{figure*}[t]
    \centering
    \includegraphics[width=\textwidth]{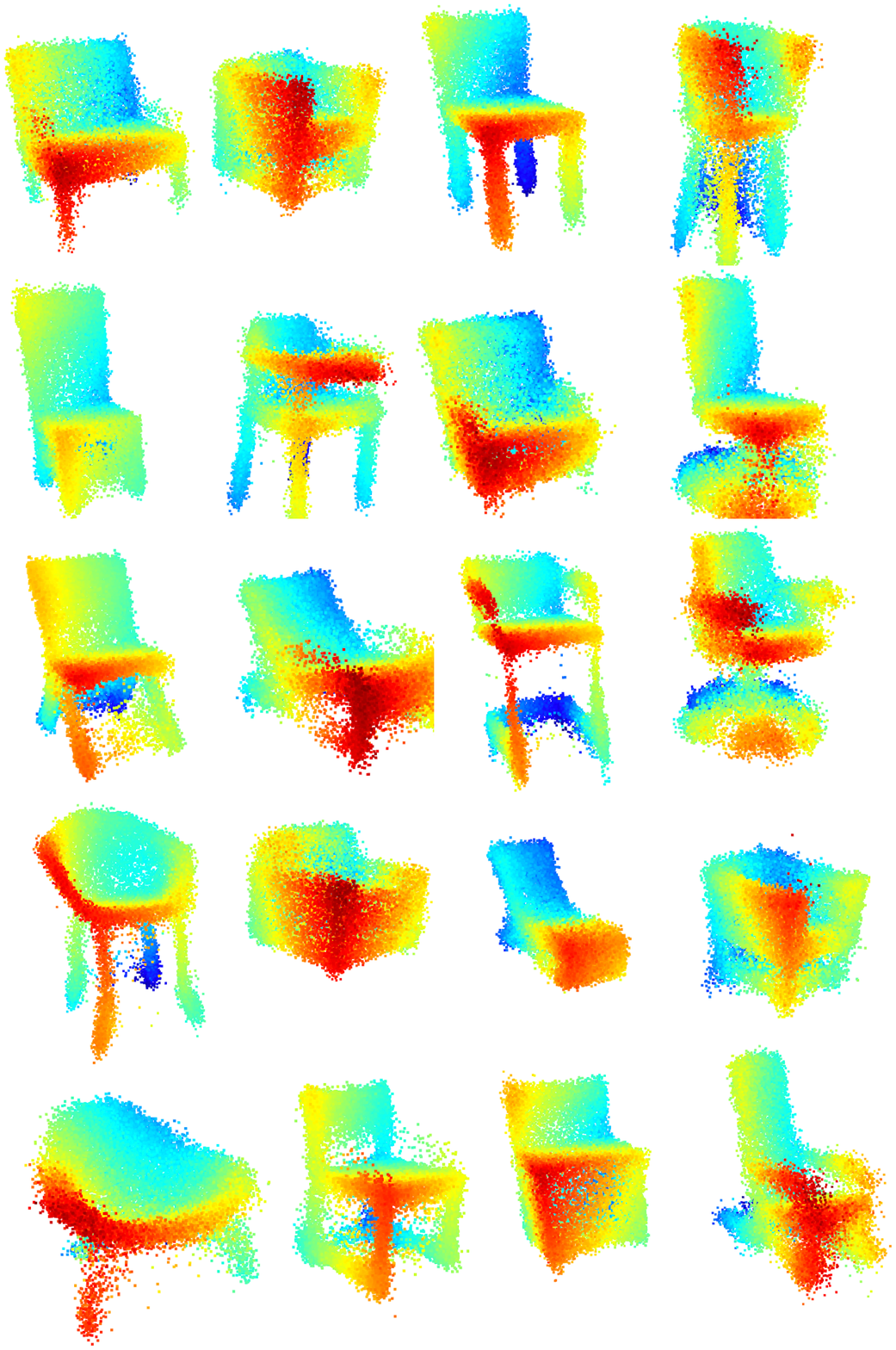}
    \caption{The generated chair point cloud by FHDM.}
    \label{fig:chair_all}
\end{figure*}

\textbf{Generating Distribution on Sphere}
Please see Figure~\ref{fig:ego} for the generated graphs using FHDM.

\textbf{Segmentation Map Generation}
Please see Figure~\ref{fig:sphere_all} for the generated distributions on sphere by FHDM.

\begin{figure*}[t]
    \centering
    \includegraphics[width=\textwidth]{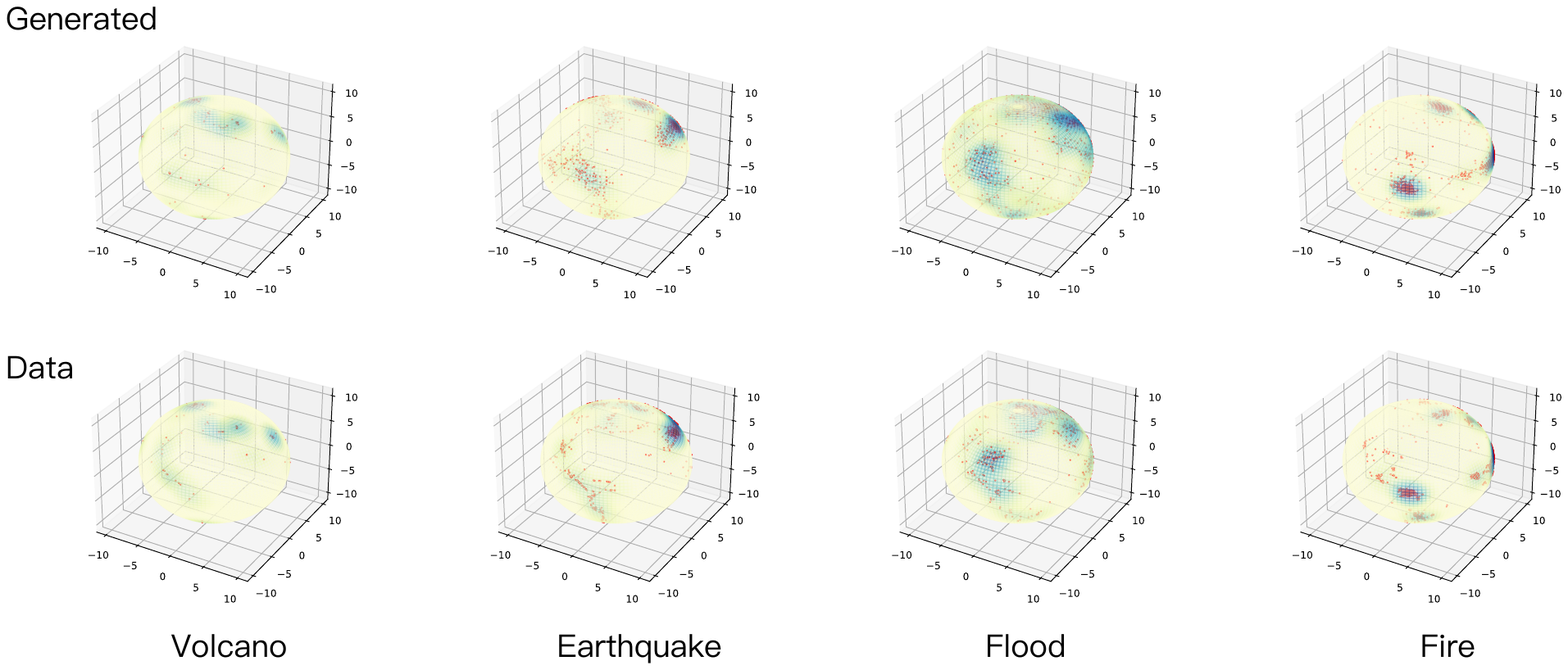}
    \caption{The generated distrubution on sphere by FHDM.}
    \label{fig:sphere_all}
\end{figure*}

\textbf{Graph Generation}
Please see Figure~\ref{fig:ego} for the generated graphs using FHDM.

\textbf{Segmentation Map Generation}
Please see Figure~\ref{fig:seg_all} for the generated segmentation maps by FHDM.

\begin{figure*}[t]
    \centering
    \includegraphics[width=\textwidth]{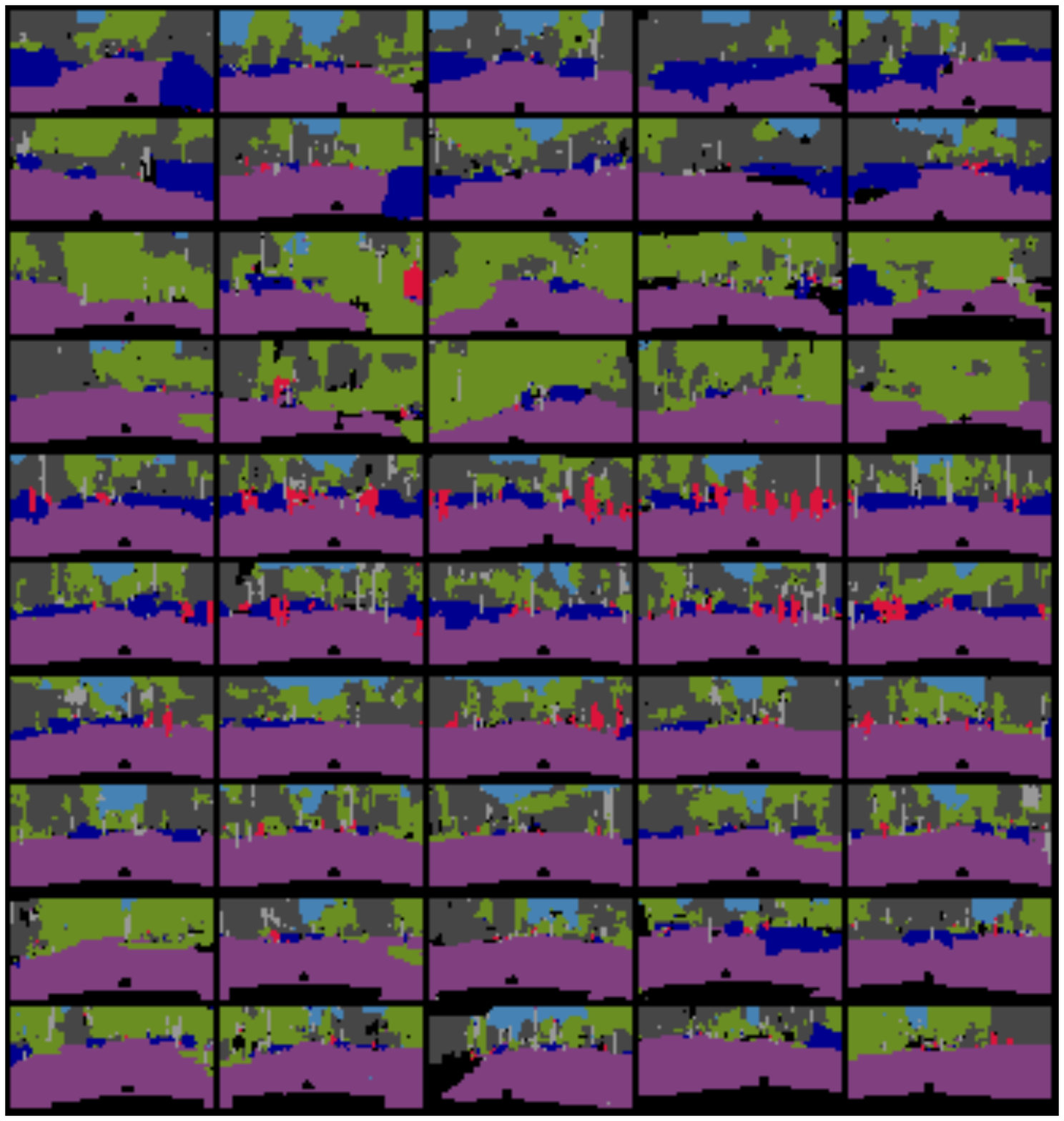}
    \caption{The generated segmentation maps by FHDM.}
    \label{fig:seg_all}
\end{figure*}

\begin{figure*}[t]
    \centering
    \includegraphics[width=0.49\textwidth]{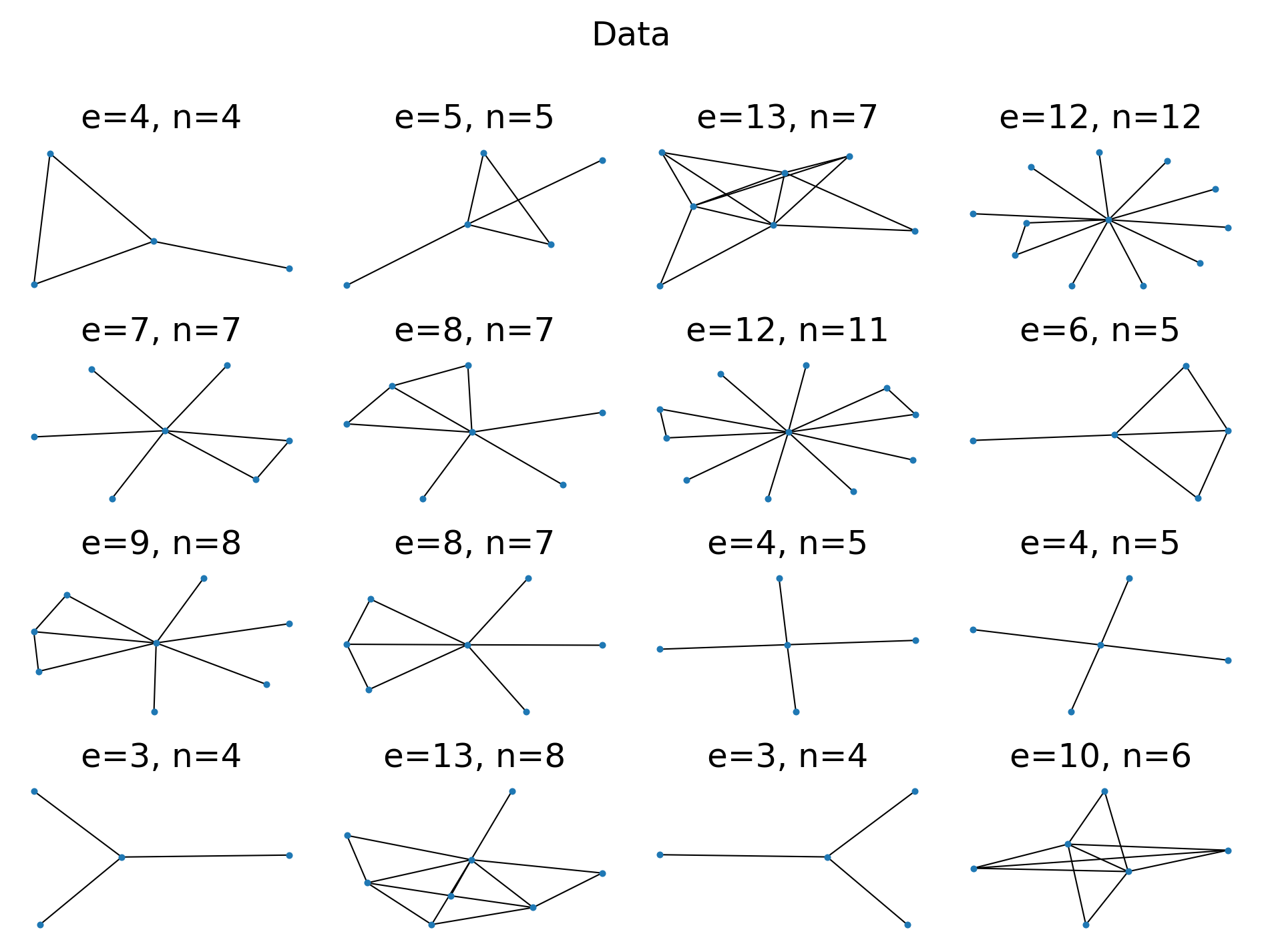}
    \includegraphics[width=0.49\textwidth]{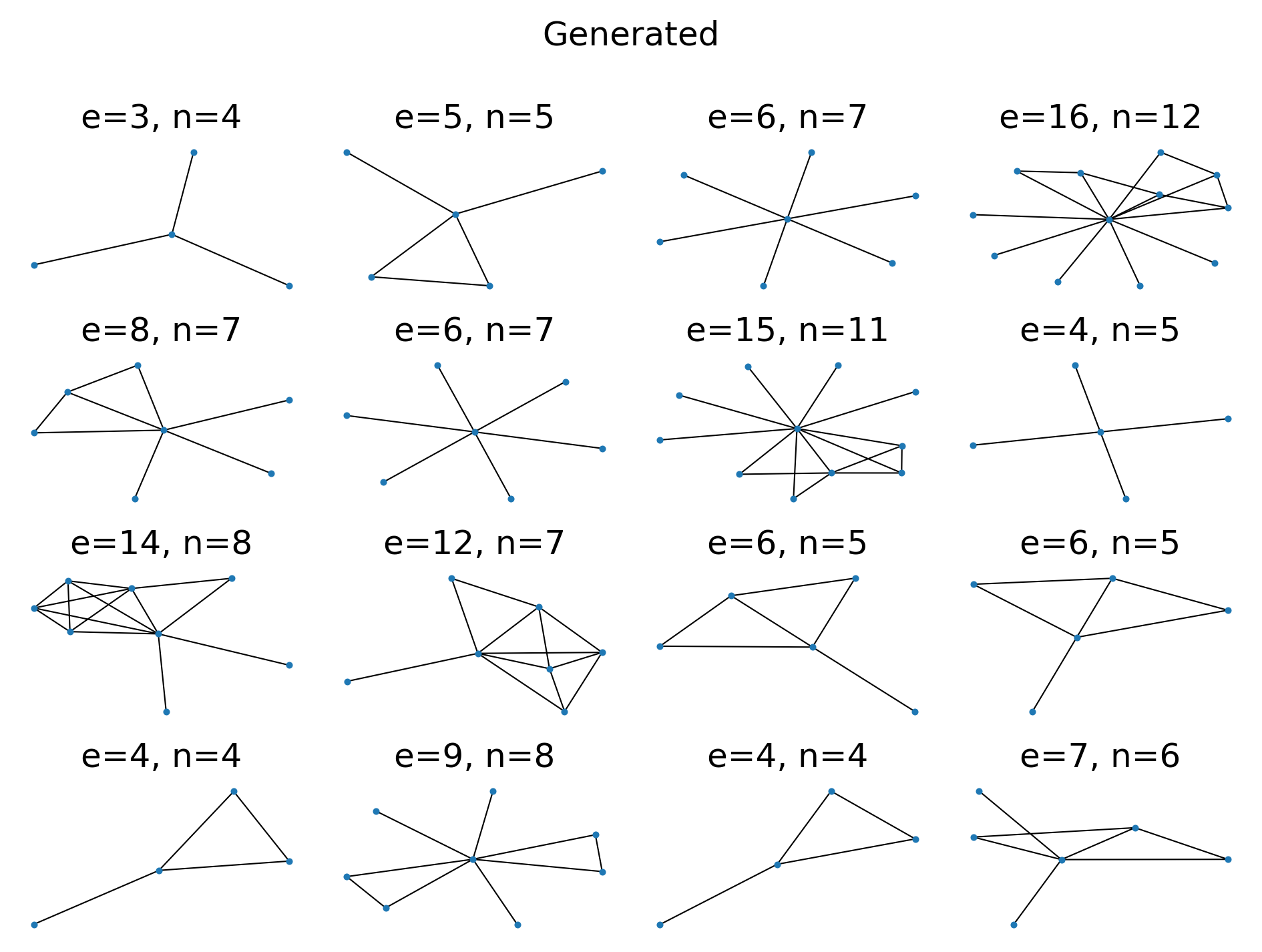}
    \includegraphics[width=0.49\textwidth]{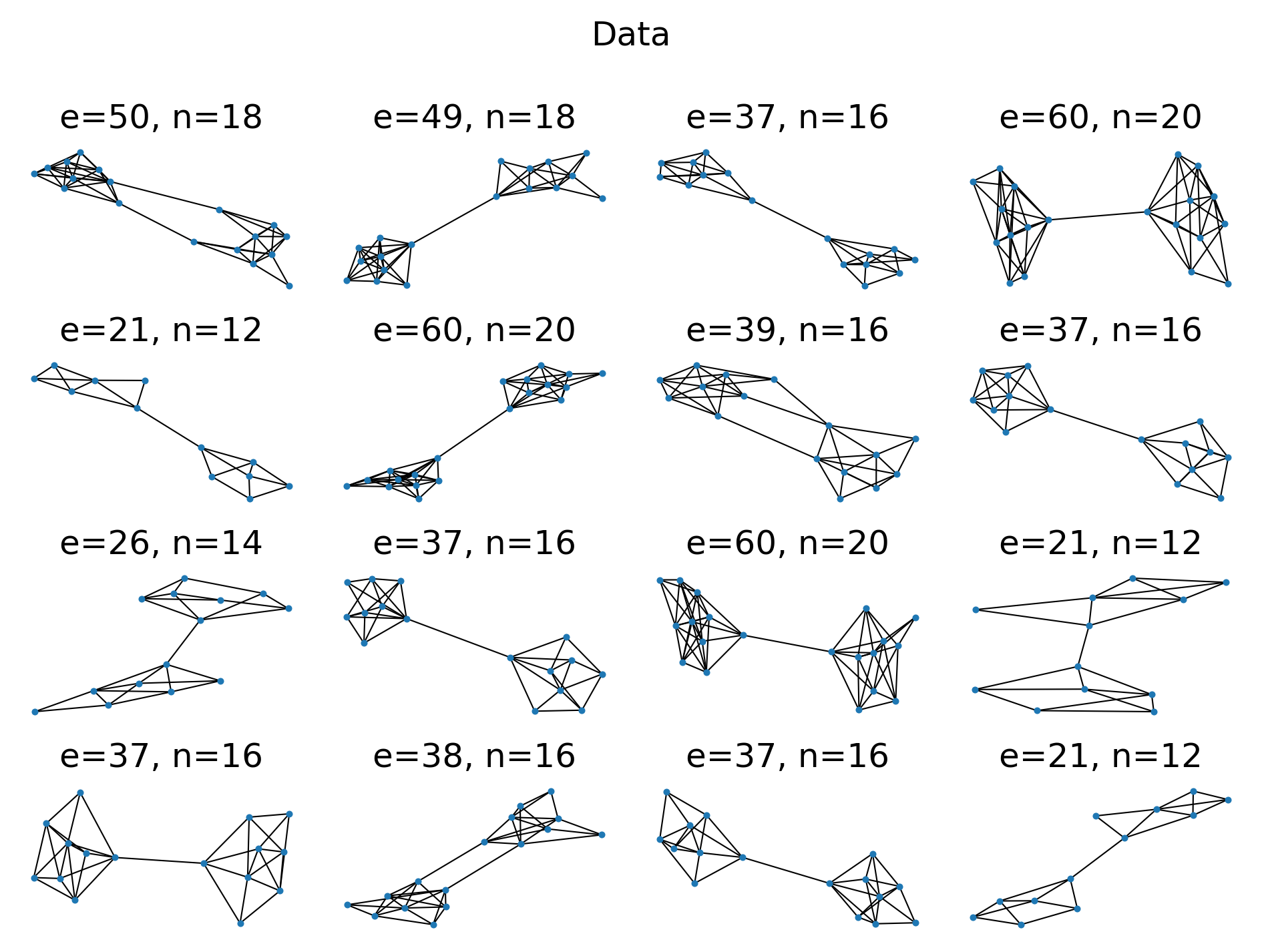}
    \includegraphics[width=0.49\textwidth]{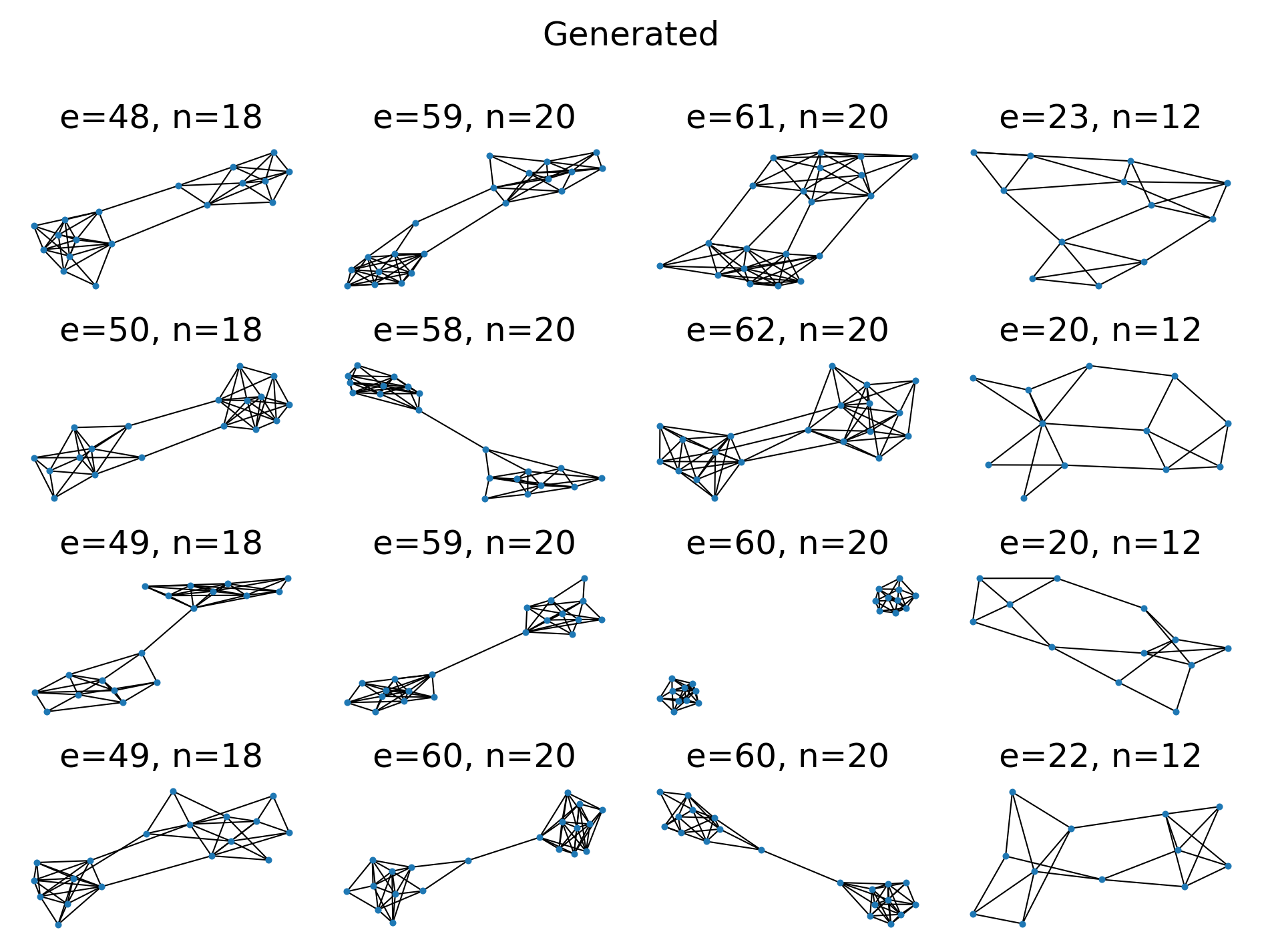}
    \caption{True and generated graphs of ego (upper rows) and community (lower rows) datasets.}
    \label{fig:ego}
\end{figure*}

\subsection{Discretization Error Analysis} \label{apx:discret}
Consider the following Ito process
\[
dZ_{t}=b(Z_{t})dt+\sigma(Z_{t})dW_{t}
\]
and a open subset $V\subseteq\R^{d}$. Here $b\in \mathbb{R}^d$ is the drift and $\sigma \in \mathbb{R}^{d\times d}$ is the diffusion matrix. We stop the process when $Z_{t}$
hit the domain $\Omega$ and denote the hitting time as $\tau:=\inf_{t\ge0}\{Z_{t} \notin V\}$.
We consider the discretalization error of the conditional distribution
$\pi_{T}$ with temporal truncation $T$, i.e, 
\[
\pi_{T}:=\text{law of}\ X_{\tau}\mid\tau\le T.
\] This corresponds to the situation that we discard the non-hit process after waiting for $T$ time. To simulate the above process, we consider the Euler discretalization
on $[0,T]$. Suppose $R$ is a set of grid points on $[0,T]$
in which we define 
\[
r_{t}=\max\{r\in R:r\le t\}.
\]
The Euler discretized process is thus defined as 
\[
d\bar{Z}_{t}=b(\bar{Z}_{r_{t}})+\sigma(\bar{Z}_{r_{t}})dW_{t}.
\]
And similarily, we can define its (discretize) stopping time $\bt\defeq \min_{r\in R}\{r \colon ~\bar{Z}_{r_{t}}\notin V\}$.
We want to bound the discrepancy between $\pi_{T}$  and the following distribution 
\[
\bar{\pi}_{T}:=\text{law of }\bx_{\bt}\mid\bt\le T.
\]
We consider the Wassestein distance $\W[\bar{\pi}_{T},\pi_{T}]$
for measuring the discrepancy. In this section, $||\cdot||$ is vector norm when applied to vector and is matrix operator norm when applied to matrix.

\begin{assumption} \label{asm:reg} 
$b$ and $\sigma$ is $L$-Lipschitz and $\sup_{z}(||b(z)||+||\sigma(z)||)\le L$.
\end{assumption}

\begin{assumption} \label{asm:char} 
There exists a bounded $C_{b}^{2}$ function $\delta:\mathbb{R}^{d}\to\mathbb{R}$
such that $\delta>0$ on $V$, $\delta=0$ on $\Omega$
and $\delta<0$ on $\mathbb{R}^{d}\setminus (V\cup\Omega)$ and satisfies the
non-characteristic boundary condition $\norm{\sigma \nabla\delta}\ge2L^{-1}$ on $\{\norm{\delta}\le r\}$ for some $r>0$.
\end{assumption}

Assumption \ref{asm:reg} is a standard assumption on the Lipschitz continuity and boundedness on the drift and diffusion function. Assumption \ref{asm:char} is more on a technical condition and is introduced in \citet{bouchard2017first} and intuitively it can be understood in the way that there exists a bounded smooth function that can indicate whether we are within $V$ or out of $V$.

\begin{theorem}
Let $\Delta:=\min_{r_{t}\neq r_{t'}}|r_{t}-r_{t'}|$. Under Assumption
\ref{asm:reg}, \ref{asm:char} and assume that $T$ is properly large such that $\P(\tau\ge T-1)\le1/4$,
we have, there exists $\epsilon>0$ such that for any $\Delta\le\epsilon$,
\[
\W^{2}[\bar{\pi}_{T},\pi_{T}]=O(\exp(cT)\Delta),
\]
for some absolute constant $c<\infty$.
\end{theorem}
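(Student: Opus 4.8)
The plan is to bound the (quadratic) Wasserstein distance $\W[\bar\pi_T,\pi_T]$ by constructing an explicit coupling of $\pi_T$ and $\bar\pi_T$ out of the synchronous coupling of the exact and Euler‑discretized SDEs, and estimating the induced transport cost. Write $X$ for the exact (unstopped) Itô process and $\bx$ for its Euler scheme, both driven by the same Brownian motion $W$ from the same initial point; let $\tau,\bt$ be the corresponding first‑exit times from $V$, set $A=\{\tau\le T\}$, $\bar A=\{\bt\le T\}$, and $\mathcal E\defeq\sup_{t\le T}\norm{X_t-\bx_t}$. Under Assumption~\ref{asm:reg} the classical Gr\"onwall/Burkholder--Davis--Gundy estimate for the Euler--Maruyama scheme gives $\E[\mathcal E^2]\le Ce^{cT}\Delta$, and the same computation yields the uniform bound $M^2\defeq\E[\sup_{t\le T}(\norm{X_t}^2+\norm{\bx_t}^2)]=O(e^{cT})$. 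This controls the trajectories away from the boundary $\Omega=\partial V$.

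The crux is that, unlike in the fixed‑time case, the two processes stop at different (random) times ($\tau\neq\bt$), so path closeness does not by itself imply closeness of the stopped endpoints $X_\tau$ and $\bx_{\bt}$. Here I would exploit Assumption~\ref{asm:char}: composing the $C_b^2$ defining function $\delta$ with the trajectories reduces the boundary geometry to a one‑dimensional problem for the Itô process $t\mapsto\delta(X_t)$, whose diffusion coefficient is bounded below by $2L^{-1}$ on the collar $\{\norm{\delta}\le r\}$; hence $X$ meets $\Omega$ transversally, with non‑degenerate normal speed, rather than tangentially. Feeding this into the first‑exit analysis of \citet{bouchard2017first} yields the required control on $\E|\tau\wedge T-\bt\wedge T|$ and on the probability of the mismatch event $A\triangle\bar A$ — the latter split into $\{|\tau-\bt|$ large$\}$ (handled by the exit‑time estimate) and $\{\tau$ within $O(\Delta)$ of the truncation time $T\}$ (handled by the regularity of the law of $\tau$, which the non‑degeneracy of $\nabla\delta$ provides). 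Combining this with the boundedness of $b,\sigma$, Burkholder--Davis--Gundy on the random interval $[\tau\wedge\bt,\tau\vee\bt]$, and the fact that the Euler overshoot past $\Omega$ is at most one increment (hence $O(\Delta)$ in mean square) gives $\E\big[\norm{X_\tau-\bx_{\bt}}^2\mathbf{1}_{A\cap\bar A}\big]=O(e^{cT}\Delta)$ and $\P(A\triangle\bar A)=O(e^{cT}\Delta)$.

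Finally I would deal with the conditioning. The hypothesis $\P(\tau\ge T-1)\le 1/4$ gives $\P(A)\ge 3/4$, and since $\P(\bar A)\ge\P(A)-\P(A\triangle\bar A)$ we get $\P(\bar A)\ge 1/2$ once $\Delta\le\epsilon$ with $\epsilon$ small enough, so renormalising the conditional laws costs only a bounded constant. On $A\cap\bar A$ the synchronous coupling already pairs $X_\tau$ with $\bx_{\bt}$; completing it arbitrarily on the leftover mass $O(\P(A\triangle\bar A))$ and using the uniform endpoint bound $M^2$, a standard gluing argument gives
\[
\W^2[\bar\pi_T,\pi_T]\ \le\ \frac{C}{\P(A)\,\P(\bar A)}\Big(\E\big[\norm{X_\tau-\bx_{\bt}}^2\mathbf{1}_{A\cap\bar A}\big]+M^2\,\P(A\triangle\bar A)\Big)\ =\ O(e^{cT}\Delta),
\]
which is the claim; the constant $\epsilon$ is chosen so that all of the above estimates (and $\P(\bar A)\ge 1/2$) hold simultaneously for $\Delta\le\epsilon$.

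I expect the second step to be the real obstacle. For a fixed‑time discretization both processes terminate simultaneously and one synchronous coupling settles everything; here one must convert the $O(\sqrt\Delta)$ (in $L^2$) discrepancy of the paths near $\Omega$ into control of the \emph{hitting‑time} discrepancy $|\tau-\bt|$ and of $\P(A\triangle\bar A)$ at the right order in $\Delta$, which genuinely needs the non‑characteristic boundary condition of Assumption~\ref{asm:char} and the quantitative exit‑time machinery of \citet{bouchard2017first}; it is also why the margin hypothesis $\P(\tau\ge T-1)\le 1/4$ is imposed, so that the behaviour of both exit times near the truncation time $T$ stays controlled. The remaining ingredients — the strong Euler estimate and the Wasserstein gluing — are routine.
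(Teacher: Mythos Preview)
Your plan is essentially the paper's: both couple the exact and Euler processes synchronously, invoke the exit-time estimate of \citet{bouchard2017first} (the paper quotes it as Theorem~3.11 there, giving $\E|\tau-\bar\tau|\le c\sqrt{\Delta}$) together with the strong Euler path bound, and use the hypothesis $\P(\tau\ge T-1)\le 1/4$ to force $\P(\bar\tau\le T)\ge 1/2$ so that the conditioning costs only a constant. The only differences are packaging: the paper first time-augments the process to $(Z_t,t)$ so that the stopping time is deterministically bounded by $T{+}1$ before applying Bouchard's results, and it conditions on the value $|\tau-\bar\tau|=s$ and integrates rather than splitting over $A\cap\bar A$ versus $A\triangle\bar A$; your explicit Wasserstein gluing is in fact a bit more careful about the two distinct conditioning events $\{\tau\le T\}$ and $\{\bar\tau\le T\}$ than the paper's last displayed inequality.
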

Intuitively, we show that when $T$ is properly large (which is true in practice as we should wait the process a reasonably enough time for hitting) and the step size is small enough, the discretalize error is small.

\begin{proof}
Throughout the proof, $c$ denotes absolute constant and may vary
in different lines. We consider the temporal augmented process $Y_{t}=[Z_{t},t]$
in which 
\[
dY_{t}=\tilde{b}(Y_{t})dt+\tilde{\sigma}(Y_{t})dW_{t}.
\]
Here $\tilde{b}$ and $\tilde{\sigma}$ are defined as 
\[
\tilde{b}(y)=\left[b(x),1\right],\tilde{\sigma}_{y}(y)\ =
\left[\begin{array}{cc}
\sigma(y) & \boldsymbol{0}\\
\boldsymbol{0}^{\top} & 0
\end{array}\right].
\]
It is not hard to verify the Lipschitz continuity and boundedness
of $\tilde{b}$ and $\tilde{\sigma}$. We also define the hitting set of the
process $Y_{t}$ by $\tilde{V}=\{y:x\in V\ \text{or}\ t < T+1\}.$
It is easy to show that $\\tilde{V}$ is a closed subset of $\R^{d+1}$
and the stopping time $\tau:=\inf_{t\ge0}\{t:Y_{t}\notin \tilde{V}\}\le T+1$.
Similarly, we can define the discretized version 
\[
d\by_{t}=\tilde{b}(\by_{r_{t}})dt+\tilde{\sigma}(\by_{r_{t}})dW_{t}.
\]
Here we slightly abuse the notation of $\tau$ and $\bar{\tau}$,
making them denoting the hitting time of process $Y_{t}$ and $\bar{Y}_{t}$
rather than $Z_{t}$ and $\bar{Z_{t}}$. We introduce the following Lemma used in \citet{bouchard2017first}.

\begin{lemma}[Theorem 3.11 in \citet{bouchard2017first}] \label{lem:exit_time}
Under assumption \ref{asm:reg} and \ref{asm:char}, there exists $\epsilon>0$ such that when
$\Delta\le\epsilon$, $\E[|\tau-\bt|]\le c\Delta^{1/2}$ for some
constant $c>0$.
\end{lemma}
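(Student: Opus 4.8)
The plan is to obtain Lemma~\ref{lem:exit_time} from Theorem~3.11 of \citet{bouchard2017first}, so the substantive steps are to check that the augmented process $Y_t$ of the proof satisfies its hypotheses and to recall the mechanism behind the $\Delta^{1/2}$ rate. For the hypothesis check: Assumption~\ref{asm:reg} makes $\tilde b$ and $\tilde\sigma$ globally Lipschitz and bounded; the $C_b^2$ function $\delta$ from Assumption~\ref{asm:char}, viewed as a function on $\R^{d+1}$ that ignores the time coordinate, is still $C_b^2$, vanishes exactly on the lateral boundary and carries the correct sign on either side; and it inherits the non-characteristic condition because $\tilde\sigma\nabla\delta=\sigma\nabla_x\delta$, so $\norm{\tilde\sigma\nabla\delta}\ge 2L^{-1}$ on $\{\norm{\delta}\le r\}$. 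Together with the facts that $\tilde V$ is closed and $\tau\le T+1$ is bounded, this is exactly the framework of \cite{bouchard2017first}, and the cited theorem then supplies a threshold $\epsilon$ such that $\E[|\tau-\bt|]\le c\Delta^{1/2}$ whenever $\Delta\le\epsilon$.

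For a self-contained argument I would first invoke the standard strong-error estimate for the continuously interpolated Euler scheme, $\E[\sup_{t\le T+1}\norm{Y_t-\by_t}^{2p}]\le c_p\Delta^{p}$ for every $p\ge1$, which follows from Gr\"onwall and Burkholder--Davis--Gundy under Assumption~\ref{asm:reg}. Then I would pass to the scalar boundary indicator $\eta_t:=\delta(Y_t)$ and its discretized analogue $\bar\eta_t:=\delta(\by_t)$. By It\^o's formula $d\eta_t=\mu_t\dt+v_t^\top\d W_t$ with $\norm{\mu_t}\le c$ and, on the band $\{\norm{\delta}\le r\}$, $\norm{v_t}=\norm{\tilde\sigma(Y_t)\nabla\delta(Y_t)}\ge 2L^{-1}$; an analogous decomposition holds for $\bar\eta_t$ with coefficients frozen at the last grid point. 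Since $\tau$ and $\bt$ are the first (respectively, first grid) times at which $\eta$ and $\bar\eta$ become $\le0$, and $|\eta_t-\bar\eta_t|\le L_\delta\norm{Y_t-\by_t}$ with $L_\delta=\mathrm{Lip}(\delta)$, the two exit events differ only by the strong error once the process is near the boundary.

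The crux is a ``no-lingering'' estimate: by a Dambis--Dubins--Schwarz time change, $\eta_t-\eta_S$ equals a Brownian motion running at rate $\norm{v_t}^2\ge 4L^{-2}$ plus a drift of size $O(h)$ over a window of length $h$, so from $\eta_S=\eta_0\le\sqrt h$ the classical reflection bound yields $\P(\eta_t>0\ \forall\, t\in[S,S+h])\le c\,\eta_0/\sqrt h$, and the same holds for $\bar\eta$ since its grid-to-grid increments are Gaussian with variance $\ge 4L^{-2}\Delta$. I would then split $\E[|\tau-\bt|]=\E[(\tau-\bt)^+]+\E[(\bt-\tau)^+]$: on $\{\bt\le\tau\}$ the continuous process lies within $L_\delta\sup_{t\le T+1}\norm{Y_t-\by_t}$ of $\Omega$ at time $\bt$, so conditioning on this supremum being $\le\lambda$ the no-lingering bound makes $\tau-\bt$ of expected order $\lambda$; taking $\lambda=\Delta^{1/2-\kappa}$, using the high-moment estimate to control $\P(\sup\norm{Y-\by}>\lambda)$ together with $\tau\le T+1$, and treating $(\bt-\tau)^+$ symmetrically (with an extra single-step $O(\Delta)$ for grid overshoot), this delivers $\E[|\tau-\bt|]\le c\,\Delta^{1/2-\kappa}$ for any $\kappa>0$.

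The hard part is upgrading this elementary $\Delta^{1/2-\kappa}$ to the sharp $\Delta^{1/2}$ stated: splitting on $\{\sup\norm{Y-\by}>\lambda\}$ costs a power of $\Delta$, and recovering the exact square-root exponent requires the finer occupation-time and change-of-measure estimates of \citet{bouchard2017first}, which bound directly the expected time the continuous process spends within $O(\Delta^{1/2})$ of $\Omega$ rather than controlling the Euler error pathwise near the exit. Since only the $\Delta^{1/2}$ rate is needed as a black box in what follows, the cleanest choice in the paper is to cite their Theorem~3.11; the sketch above is meant only to identify which two assumptions --- Lipschitz/bounded coefficients and the non-characteristic boundary --- do the work, and why the rate is a square root.
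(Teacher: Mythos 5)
Your proposal is correct and takes essentially the same approach as the paper: the paper gives no proof of this lemma at all, importing it verbatim as Theorem~3.11 of \citet{bouchard2017first}, which is exactly your main line. Your verification that the augmented coefficients $\tilde b,\tilde\sigma$ and the boundary function $\delta$ satisfy the cited theorem's hypotheses, and your heuristic for the $\Delta^{1/2}$ rate, go beyond what the paper records but do not change the argument.
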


Note that 
\[
\E[||\by_{\bt}-Y_{\tau}||^{2}\mid\bt\le T]\le\int\E[||\by_{\bt}-Y_{\tau}||^{2}\mid|\bt-\tau|=s,\bt\le T]\ \PP(|\bt-\tau|=s\mid\bt\le T)ds.
\]
Note that we can decompose
\begin{align*}
 & \E[||\by_{\bt}-Y_{\tau}||^{2}\mid|\bt-\tau|=s,\bt\le T]\\
\le & 2\E[||\by_{\bt}-Y_{\bt}||^{2}\mid|\bt-\tau|=s,\bt\le T]+2\E[||Y_{\bt}-Y_{\tau}||^{2}\mid|\bt-\tau|=s,\bt\le T].
\end{align*}
Using Lemma A.2 in \citet{bouchard2017first} and Holder's inequality, we have 
\[
\le\E[||\by_{\bt}-Y_{\bt}||^{2}\mid|\bt-\tau|=s,\bt\le T]\le\sup_{t\in[0,T+s]}||\by_{\bt}-Y_{\bt}||^{2}\le c\Delta,
\]
for some constant $c$. Also, by the boundedness of $\tilde{b}$
\[
||Y_{\bt}-Y_{\tau}||=||\int_{\min(\bt,\tau)}^{\max(\bt,\tau)}\tilde{b}(Y_{t})dt||\le(L+1)||\tau-\bt||.
\]
Using these two bounds, 
\[
\E[||\by_{\bt}-Y_{\tau}||^{2}\mid|\bt-\tau|=s,\bt\le T]\le c(\Delta+||\tau-\bt||^{2}).
\]
This gives that 
\begin{align*}
\E[||\by_{\bt}-Y_{\tau}||^{2}\mid\bt\le T] & \le\int_{0}^{T}c(\Delta+||\tau-\bt||^{2})\PP(|\bt-\tau|=s\mid\bt\le T)ds\\
 & \le c\left(\Delta+\int_{0}^{T}s^{2}\PP(|\bt-\tau|=s\mid\bt\le T)ds\right).
\end{align*}
Now we proceed to bound 
\begin{align*}
 \PP(|\bt-\tau|=s\mid\bt\le T) 
=  \frac{\PP(|\bt-\tau|=s,\bt\le T)}{\PP(\bt\le T)}\le\frac{\PP(|\bt-\tau|=s)}{\PP(\bt\le T)}.
\end{align*}
Note that 
\begin{align*}
\PP(\bt>T) & =\int_{0}^{T+1}\PP(\bt>T,\tau=s)ds\\
 & =\int_{0}^{T-1}\PP(\bt>T,\tau=s)ds+\int_{T-1}^{T+1}\PP(\bt>T,\tau=s)ds\\
 & \le(T-1)\PP(|\bt-\tau|\ge1)+\int_{T-1}^{\infty}\PP(\tau=s)ds\\
 & \le(T-1)\E(|\bt-\tau|)+(1-F_{\tau}(T-1))\\
 & \le c\Delta+(1-F_{\tau}(T-1)),
\end{align*}
where $F_{\tau}$ denotes the CDF of $\tau$. When $T$ is properly
large and $\Delta$ is small enough, we have $\PP(\bt>T)\le1/2$ and
thus $\PP(\bt\le T)=1-\PP(\bt>T)\ge1/2$. This implies that 
\[
\PP(|\bt-\tau|=s\mid\bt\le T)\le2\PP(|\bt-\tau|=s).
\]
We thus conclude that 
\begin{align*}
 & \int_{0}^{T}s^{2}\PP(|\bt-\tau|=s\mid\bt\le T)ds\\
\le & 2\int_{0}^{T}s^{2}\PP(|\bt-\tau|=s)ds\\
\le & 2T\int_{0}^{T}s\PP(|\bt-\tau|=s)ds\\
= & 2T\E(|\bt-\tau|)\\
\le & c\Delta.
\end{align*}
We finally conclude that 
\[
\W^{2}[\bar{\pi}_{T},\pi_{T}]\le\E[||\bar{Z}_{\bt}-Z_{\tau}||^{2}\mid\bt\le T]\le\E[||\by_{\bt}-Y_{\tau}||^{2}\mid\bt\le T]\le c\Delta.
\]

\end{proof}

\clearpage

\subsection{Proofs} \label{apx:proof}

\printProofs

\subsection{More Discussions on First Hitting Diffusion Models on $\RR^d$} 
Assume the distribution $\tg$ of interest is on $\RR^{d}$. 
To design first hitting diffusion models that yield results on $\tg$, 
we embed $\RR^d$ into the hyperplane $\Omega \defeq \{(x, y) \in \RR^{d+1} \colon y = \ymax\}$ in $\RR^{d+1}$ where $y_{\max}$ is a constant (e.g., $\ymax=1$). 
We construct a 
baseline process $\bar \Q$ to be a diffusion process on $\RR^{d+1}$: 
\bbb \label{equ:ZtYt}
\bar \Q \colon ~~~~ 
 \df {Z}_t = \df W_t, ~~~~~ 
\df Y_t =  b(Y_t, t) \dt +  \sigma \df \tilde W_t, ~~~~ Z_0 = z_0\in \RR^d, ~~ Y_0 = 0, 
\eee   
where $W_t$ and $\tilde W_t$ are independent Brownian motions in $\RR^{d}$ and $\RR$, respectively. 

We can think $Y_t$ as an ``effective age'' of the particle $\X_t$, 
and the sample is collected when $Y_t = \ymax$. Therefore, the hitting time of interest is $\tau \defeq \{t\colon (X_t,Y_t)\in \Omega \} = \{t\colon Y_t = \ymax \}$. 

A special case is $\sigma =0$ and $b(Y_t, t) = 1$, in which case $(\X_t, Y_t)$ hits the target domain $\Omega$ in the fixed time $t = \ymax$. This corresponds to the standard denoising diffusion models \citep[e.g.,][]{song2020score}. 

Another extreme case  is to take $b=0$, so that $Y_t$ is a Brownian motion without a drift. In this case, 
the hitting time follows an inverse Gamma distribution, 
and the exit distribution is a Cauchy distribution: 
\bb 
\left (\tau  ~|~ \X_t, Y_t\right )\sim  \invgamma\left (\frac{1}{2}, \frac{(\ymax-Y_t )^2}{2\sigma^2} \right), 
&&
\left (\X_{\tau }|~ \X_t, Y_t\right ) \sim \cauchy\left (\X_t, \frac{\ymax-Y_t}{2} \right ), 
\ee 
where the density of $\invgamma(\alpha, \beta)$ is $f(x; \alpha, \beta) =  \frac{\beta^\alpha}{\Gamma(\alpha)} x^{-(\alpha+1)} \exp(-\beta/x)$, and density of 
$\cauchy(\mu, s)$ is 
$f(x; \mu, s) \propto (s^2+\norm{x-\mu}^2)^{-(d+1)/2}$. 

An advantage of using random hitting is that it allows us to spend less time on generating $\X_t$ that is close to the starting point (i.e., small  $\norm{\X_t - x_0}$), and more time on the further points. It allows us to adapt the time based on the ``hardness'' of the target distribution. 

\paragraph{Accelerating the First Hitting Time} 
The inverse Gamma distribution above has a heavy tail 
and occasional causes large hitting time. 
One way to ensure a bounded hitting time is to derive the conditioned process of Brownian motion given that the hitting time $\tau$ is no larger than a threshold. Specifically,  assume $\dist B: \d Y_t = \d W_t$ starting from $Y_0 = y_0 < \ymax$ and $\tau = \inf\{t\colon Y_t = \ymax\}.$ Using $h$-transform, we can show that  $\dist B(\cdot ~|~ \tau \leq T)$ is governed by the following diffusion process: 
 \bb 
 \dist B^T\defeq \dist B(\cdot ~|~ \tau \leq T): ~~~~~~
 & dY_t =\dd_{y} \log \left (1-F\left ( 
 \frac{|\ymax - Y_t|}{\sigma \sqrt{T-t}} \right ) \right ) dt + \sigma \df \tilde W_t, 
 \ee 
 where $F$ is the CDF of standard Gaussian distribution. 

Taking $b(Y_t, t) = \dd_{y} \log \left (1-F\left ( 
 \frac{|\ymax - Y_t|}{\sigma \sqrt{T-t}} \right ) \right )$ in Eq.~\ref{equ:ZtYt}, 
we can obtain the following Poisson kernel for $\bar \Q$: 
$$
\bar \Q (\X_\tau = \d x' ~|~ \X_t=x, Y_t=y)
= \Gamma\left (\alpha, \frac{\phi(x'; x, y)}{T-t} \right ) \frac{\abs{\ymax-Y_t}}{\phi(x'; x, y)^\alpha} \d x',  
$$
where $\alpha = \frac{d+1}{2}$ and $\phi(x'; x, y) = \frac{1}{2}((\ymax-y)^2 + \norm{x'-x}^2)$, 
and $\Gamma(
\alpha, x)$ is the upper incomplete gamma function. 
Correspondingly, the hitting time of this new process is $\invgamma\left (\frac{1}{2}, \frac{(\ymax-Y_t )^2}{2\sigma^2} \right)$ truncated on $[0, T]$.

\end{document}